\newcolumntype{C}[1]{>{\centering\arraybackslash$}p{#1}<{$}}
\newcommand{\cB}{\mathcal{B}}
\newcommand{\cC}{\mathcal{C}}
\newcommand{\cV}{\mathcal{V}}
\newcommand{\R}{\mathbb{R}}
\newtheorem{theorem}{Theorem}[section]
\newtheorem{assump}{Assumption}
\newtheorem{lemma}{Lemma}
\newtheorem{proposition}{Proposition}
\begin{document}

\title{Spectral Embedding Norm: Looking Deep into the Spectrum of the Graph Laplacian}

\author[1]{Xiuyuan Cheng}
\author[2]{Gal Mishne\footnote{Email: gmishne@ucsd.edu.}}
\affil[1]{Department of Mathematics, Duke University}
\affil[2]{Hal\i c\i o\u glu Data Science Institute, University of California, San Diego}

\date{}
\maketitle

\begin{abstract}
The extraction of clusters from a dataset which includes multiple clusters 
and a significant background component is a non-trivial task of practical importance.
In image analysis this manifests for example in anomaly detection and target detection.
The traditional spectral clustering algorithm, 
which relies on the leading $K$ eigenvectors to detect $K$ clusters, 
fails in such cases.
In this paper we propose the {\it spectral embedding norm} 
which sums the squared values of the first $I$ normalized eigenvectors, 
where $I$ can be significantly larger than $K$. 
We prove that this quantity can be used to separate clusters from the background in unbalanced settings,
 including extreme cases such as outlier detection.
The performance of the algorithm is not sensitive to the choice of $I$,
and we demonstrate its application on synthetic and real-world remote sensing and neuroimaging datasets.
\end{abstract}

% REQUIRED
%\begin{keywords}
%Spectral Clustering, Graph Laplacian, Spectral Theory, Outlier detection, Calcium imaging
%\end{keywords}

%%%
% sec 1
\section{Introduction}

In unsupervised learning and data analysis, one of the most common goals is to group the data points into clusters.
A variant task is to extract interesting clusters from the data when, in practice, data points do not perfectly fall into $K$ clusters.
We consider a non-trivial setting in which 
data consist of not only interesting sub-groups, 
namely ``clusters'', 
but also a large component containing points which are less structured or of less interest, which we call ``background''.
Important examples in imaging data analysis
include image segmentation and saliency detection where the clusters are regions of interest in the image, and the background consists of the rest of the image~\cite{borji2015salient,peng2013}.
Another example is the task of anomaly (or outlier) detection
where anomalous samples (small clusters) in the dataset differ from the normal ones (background)
and indicate that something important has happened or a problem has occurred. 
By the very nature of the problem, most data points belong to the background and only a small fraction of data points are anomalies.
Anomaly detection in images is an important task in a variety of applications such as target detection in remote sensing imagery, detecting abnormalities such as tumors in biomedical imagery and for quality inspection in production lines. An automated solution highlighting only suspicious regions to be reviewed by an expert would save greatly on time.

In theory, one may view the background component as an extra cluster, 
however the unbalanced size of the clusters versus background components poses challenges for traditional clustering methods.
The popular spectral clustering algorithm~\cite{Weiss1999,Ng2002,Shi2000} reduces the dimensionality of the data using a spectral embedding, 
and then performs clustering in the low-dimensional space. 
The method originally proposed to cluster data into $K$ clusters by applying $k$-means to the leading $K$ eigenvectors 
(the low-lying eigenvectors of the graph Laplacian)
computed from an affinity matrix built from the data~\cite{Weiss1999,Ng2002,selftune,Nadler07b}.
A question is then how to set the parameter $K$, 
and this is especially important for exploratory data analysis when the number of clusters underlying the data is not known {\it a priori}.
The traditional solution is to use the spectral gap of the eigenvalues 
to determine $K$ \cite{von2007tutorial}, 
yet in practical settings, such a gap may not exist.
In particular,  when the cluster sizes are unbalanced,
or a large background component is present,
there is no spectral gap after the $K$-th eigenvalue
and
the leading eigenvectors do not localize on $K$ given clusters, but rather tend to be supported 
mostly on the large component due to the slow mixing time of the diffusion process restricted to it.
The unbalanced case of outlier detection is a classical scenario where traditional spectral clustering fails to identify the existing clusters 
\cite{Nadler2006a,zhao2010spectral,xiang2008spectral}.
As has been shown in~\cite{miller2010subgraph,cloninger2015eigenvector,nian2016auto,wu2013spectral} and will be demonstrated, 
the eigenvectors which indicate clusters or outliers may lie deep within the spectrum of the affinity matrix.  
This gives rise to the notion of abandoning the guideline of focusing on $K$ eigenvectors and rather choosing to look deeper into the spectrum
in such settings.

In this paper we consider a cluster-background splitting model of the graph, including anomaly detection as a special case. 
The model is motivated by applications and will be tested on real-world datasets.
We propose a quantity called the {\it spectral embedding norm},
which maps each node in the graph to a positive number, 
and separates clusters from background with a theoretical guarantee. 
The idea is closely related to the ``localization'' pattern of the eigenvectors, namely where they are supported on---either mainly on the cluster block or on the background block---
and this pattern maintains even when the spectral gap vanishes. 
Viewing the affinity matrix as a perturbed one from a baseline affinity where the background and clusters are completely disconnected, 
one can analyze the consequent deformation of the spectrum of the graph Laplacian matrix. 
However, the instability of eigenvectors under the deformation poses difficulty to the use of individual eigenvectors in this environment. 
The {\it spectral embedding norm}, 
on the other hand,
improves the stability by using a summation over multiple eigenmodes and 
provides guaranteed detection of the clusters by simple thresholding. 
The algorithm involves a parameter which is the number of eigenvectors summed over,
and the performance is not sensitive to the parameter choice.

Our result thus provides a way to go beyond dominating eigenvectors of the graph Laplacian to unbalanced data clustering tasks with theoretical verification. 
It suggests that, in the presence of cluttered background samples, 
it is beneficial to look deep into the spectrum to identify important and subtle structures.
By providing a simple measure by which to separate clusters from background, 
the method allows the application of signal processing and machine learning methods to the cluster samples only without contamination from the background. 
Our analysis spans a setting of multiple clusters against background to the extremely unbalanced case of outlier detection.

In the rest of the paper,
we review more related literature before ending the current section. 
The {\it spectral embedding norm} is introduced in Section \ref{sec:2}, with an illustrative example to show the main idea. 
The theoretical result is presented in Section \ref{sec:3},
experiments on synthetic and real-world image datasets in Section \ref{sec:4}, 
proofs in Section \ref{sec:proof}, 
and further remarks in the final section. 

{ \bf Notations.}
$|\cdot |$ stands for the cardinal number of a set. $A^c$ means the complement of a set $A$.

\subsection{Related works}
As spectral clustering and variants have been intensively studied in literature, 
we list the most relevant works to our problem. 
 
The spectral embedding for clustered data has been previously analyzed in many places. 
While Schiebinger et al.~\cite{schiebinger2015geometry} analyzed a nonparametric mixture model to show that under certain conditions 
the embedded points lie in an orthogonal cone structure and $k$-means succeeds in clustering the data, 
Nadler and Galun~\cite{Nadler2006a} showed that even for well separated Gaussians the top $K$ eigenvectors do not necessarily localize on $K$ clusters, 
based on the analysis of a diffusion process in a multi-well potential~\cite{Nadler07b, Singer:2009}.  
Different approaches, e.g. \cite{selftune, damle2016robust}, 
attempted to align the eigenvectors axes with the different clusters and improve the robustness of cluster identification.
Zelnik-Manor and Perona~\cite{selftune} proposed to estimate the number of clusters from the eigenvectors instead of from the spectral gap,
and empirically demonstrated improved performance when a background cluster is present.
Damle, Minden and Ying~\cite{damle2016robust} considered the case of balanced block-like affinity matrix. 
The {\it embedding norm} studied in the current work differs from the above approaches, 
and it involves a simple algorithm with theoretical guarantee under the specified settings.

Spectral embeddings have been used for anomaly detection in several modified ways:
based on the first non-trivial eigenvector of an affinity matrix~\cite{ide2004eigenspace,nian2016auto}, 
eigenvector selection~\cite{miller2010subgraph,wu2013spectral}, 
out-of-sample extension~\cite{Mishne2013,aizenbud2015pca,Mishne2017,mishne2017iterative}, 
the algebraic structure of the weighted magnitude sum of Laplacian eigenfunctions
\cite{CHENG201848} and multiscale constructions of spectral embeddings~\cite{Rabin2010,Mishne2013}, and usually requiring tuning of multiple parameters.
For a general review of anomaly detection methods, the interested reader is referred to~\cite{chandola2009anomaly,goldstein2016comparative,akoglu2015graph,ehret2019image}.
In particular, 
``eigenvector selection'' 
has been proposed to determine eigenvectors that localize on clusters or specifically on anomalies, 
using unsupervised spectral ranking~\cite{ nian2016auto}, 
kurtosis~\cite{wu2013spectral}, relevance learning~\cite{xiang2008spectral}, entropy~\cite{de2002eigenfunction,zhao2010spectral}, 
the L1 norm~\cite{miller2010subgraph}, 
tensor product~\cite{cucuringu2011localization}  and local linear regression~\cite{DSILVA2018}.
Both~\cite{xiang2008spectral,zhao2010spectral} proposed calculating $Km$ eigenvectors where $m>1$ and then select informative eigenvectors.
Wu et al.~\cite{wu2013spectral} analyzed the adjacency matrix of a graph, 
while Miller, Bliss and Wofle~\cite{miller2010subgraph} considered the modularity matrix of a graph.
In the current paper, we analyze the spectrum of the 
(normalized) random-walk Laplacian matrix which has a more stable spectrum with finite samples \cite{von2008consistency}.

The proposed notion of {\it embedding norm} 
formally resembles the probability amplitude in quantum mechanics,
which sums squared modulus of the low-energy wave functions. 
It is also similar to the {\it leverage score} in statistics, 
defined to be the squared sum of principal components,
which has been used as an indicator of outlier samples 
in linear regression and other statistical applications~\cite{hoaglin1978hat}.
The generalized form of the embedding norm 
with exponentially decaying weights has been previously suggested 
as a tool to identify salient features in shape analysis, called {\it Heat kernel signature} \cite{sun2009concise}.
However, 
the settings in the previous statistical and computer graphical studies are different from our consideration of the cluster-background separation,
and particularly, the equal weights on the truncated sum over eigenvectors
which more resembles probability amplitude 
has its own motivation, see more in the last section.

%%%
%  sec 2
%
\section{Spectral Embedding Norm}
\label{sec:2}

Given $n$ data points in the feature space,
an undirected weighted graph can be constructed
which has $n$ nodes, denoted by $\cV$,
and the weight on edge $(x,y)$ is the affinity between nodes $x$ and $y$ denoted by $W(x,y)$.
$W$ is an $n$-by-$n$ real-symmetric matrix of non-negative entries $W(x,y) = W(y,x) \ge 0$, called the graph affinity matrix.
In applications,
$W$ is built as a pairwise affinity between data points in a feature space, e.g., 
$W(x,y) = k(x,y)$, where $k$ is a symmetric kernel function applied to the feature vectors of data points $x$ and $y$. 
In our analysis we assume that $W$ has been constructed. 

\subsection{Cluster-Background splitting in the graph}
Suppose that $\cV$ can be divided into two disjoint subsets, background and clusters,
denoted by $\cB$ and $\cC$ respectively. 
The typical scenario which we consider is when 
data points in $\cC$ are concentrated in the feature space and well-clustered into $K$ sub-clusters,
whereas those in $\cB$ can be ``manifold-like'' and spread over the space. The precise assumptions will be formulated in terms of 
the graph-Laplacian spectra constrained to the subgraphs of $\cC$ and $\cB$
(see Assumption \ref{assump:A1}). 
We also assume the connections between $\cC$ and $\cB$ are weak.
As a result, the submatrix of $W$ constrained to $\cC$ is close to having $K$ blocks,
and is almost separated from the submatrix of $\cB$. 
Define matrix $W_0$ by removing all the connections between $\cB$ and $\cC$ from $W$,
i.e. $W_0$ is a block-diagonal matrix consisting of two blocks of $\cC$ and $\cB$ respectively. 
We introduce a pseudo-dynamic parametrized by time $t$ as 
\begin{equation}
W(t)=W_{0}+t E,
\quad t \in [0,1]
\label{eq:deformW}
\end{equation}
so that $W(0)=W_{0}$ and $W(1)=W$.
To simplify the analysis, we assume that the $K$ sub-clusters are of equal size, that is,
\[
|\cC| = \delta |\cV|, 
\quad |\cB| = (1-\delta)|\cV|, 
\]
and each of the $K$ clusters in $\cC$ has $\frac{\delta |\cV|}{K}$ nodes. 
The result extends to the unequal-size case.

\subsection{Graph Laplacian and embedding norm}
Consider the normalized random-walk graph Laplacian of $W$
\[
L=I-D^{-1}W:=I-P, 
\]
where $D$ is a diagonal matrix defined by $D_{ii} =\sum_j W_{ij}$,  
and $P$ is a Markov matrix. We shall see that $D$ is always invertible.
$P$ is similar to $D^{-1/2}WD^{-1/2}$ which is real-symmetric, thus $P$ is diagonalizable and has $n$ real eigenvalues.
Let
\begin{equation}
\label{eq:evec_ortho}
P \psi_{k}=\lambda_{k}\psi_{k},
\quad
k = 1,\cdots, n,
\quad
\psi_{k}^{T}D\psi_{j}=\delta_{kj},
\end{equation}
where $\{\lambda_{k}\}_k$ are the eigenvalues of $P$, $\{\psi_{k}\}_k$ are the corresponding right eigenvectors and  $\delta_{kj} = 1$ when $k=j$ and 0 otherwise.
Given that $|\lambda_{k}| \le 1$ (Perron-Frobenius theorem),
and when $W$ is a positive-definite kernel matrix then all the eigenvalues are between 0 and 1.
The largest eigenvalue of $P$ is $1$ and the associated eigenvectors is the constant vector.
Note that time dependence is omitted in the above notations:
As we introduced the deformation of $W$ in \eqref{eq:deformW}, 
$D$, $P$ and consequently $\psi_k$ and $\lambda_k$ also depend on $t$.
We assume that at $t=1$ the eigenvalues are sorted to be decreasing,
and for other $t$, the indexing $k$ is arranged so that 
$\psi_k$ and $\lambda_k$ are differentiable with respect to $t$ \cite{kato2013perturbation}.

The {\it spectral embedding norm} of every node $x \in \cV$ is defined to be
\begin{equation}
\label{eq:def-Sx}
S(x)=\sum_{k\in I}\psi_{k}(x)^{2}
\end{equation}
where $I$ is a subset of  the eigenvalue indices $ \{1, \cdots, n\}$.
$S(x)$ is the (squared) Euclidean norm of 
the embedded vector of $x$ 
in the spectral embedding space using eigenvectors of indices in $I$. 
When needed, we include the time dependence in the notation written as $S(x,t)$, $t \in [0, 1]$.
A typical choice of $I$ is $I=\{1,\cdots,|I|\},$
when the eigenvalues are sorted to be descending.
The cardinal number $|I|$ is a parameter of the method
and in the scenario of outlier detection it is typically larger than $K$.
In practice, estimates of $K$ can be used if $K$ is not known.
We will show that the result is not sensitive to the choice of $|I|$, in analysis and experiments. 

The embedding norm $S$ is able to separate $\cC$ from $\cB$ by a provable margin under certain assumptions (Theorem \ref{thm:S-sepa}). 
A general weighted form of $S$ can be introduced and the result extends directly.
This is naturally related to the {\it diffusion distance} \cite{coifman2006diffusion},
and we will explain more on this in the last section. 

\begin{figure}[t]
 \subfloat[]{\includegraphics[height=0.25\linewidth]{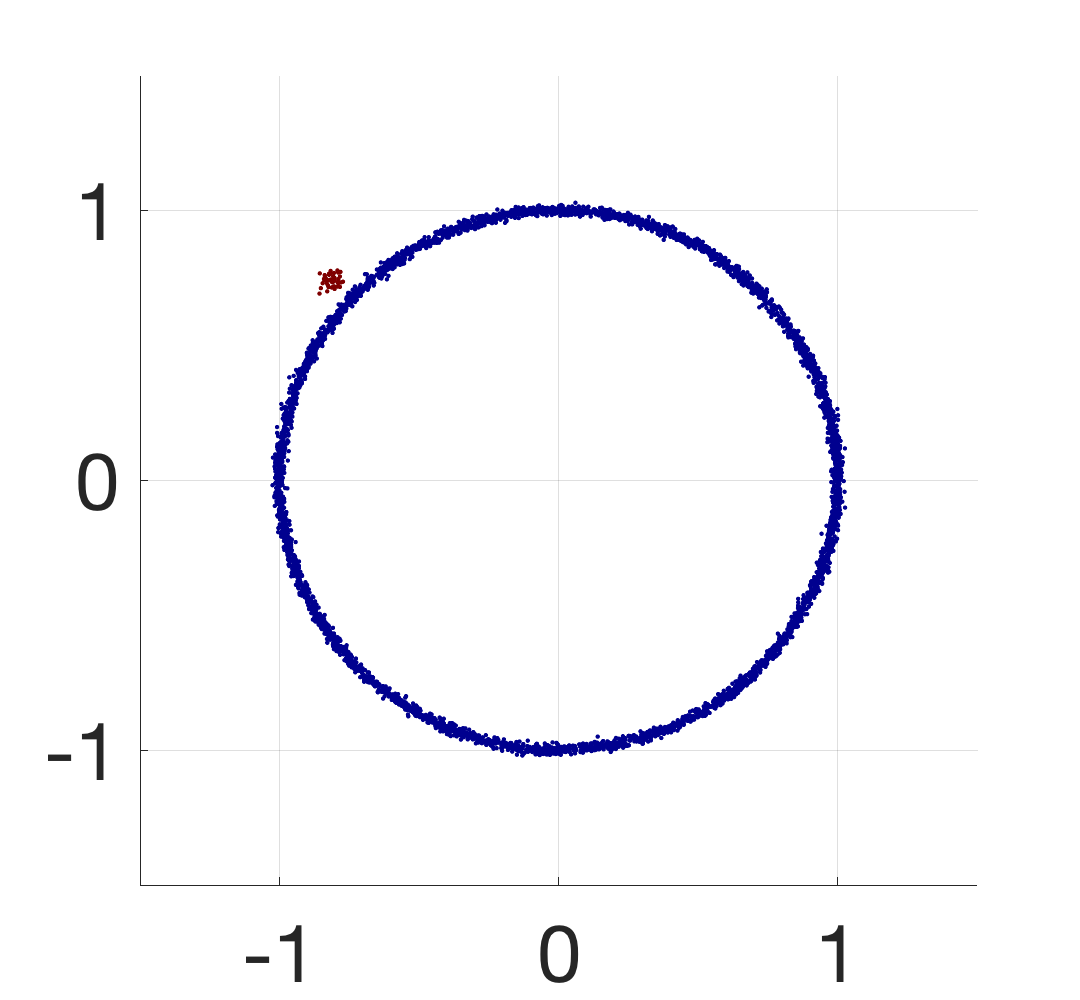}} ~~~~~~
 \subfloat[]{\includegraphics[height=0.25\linewidth]{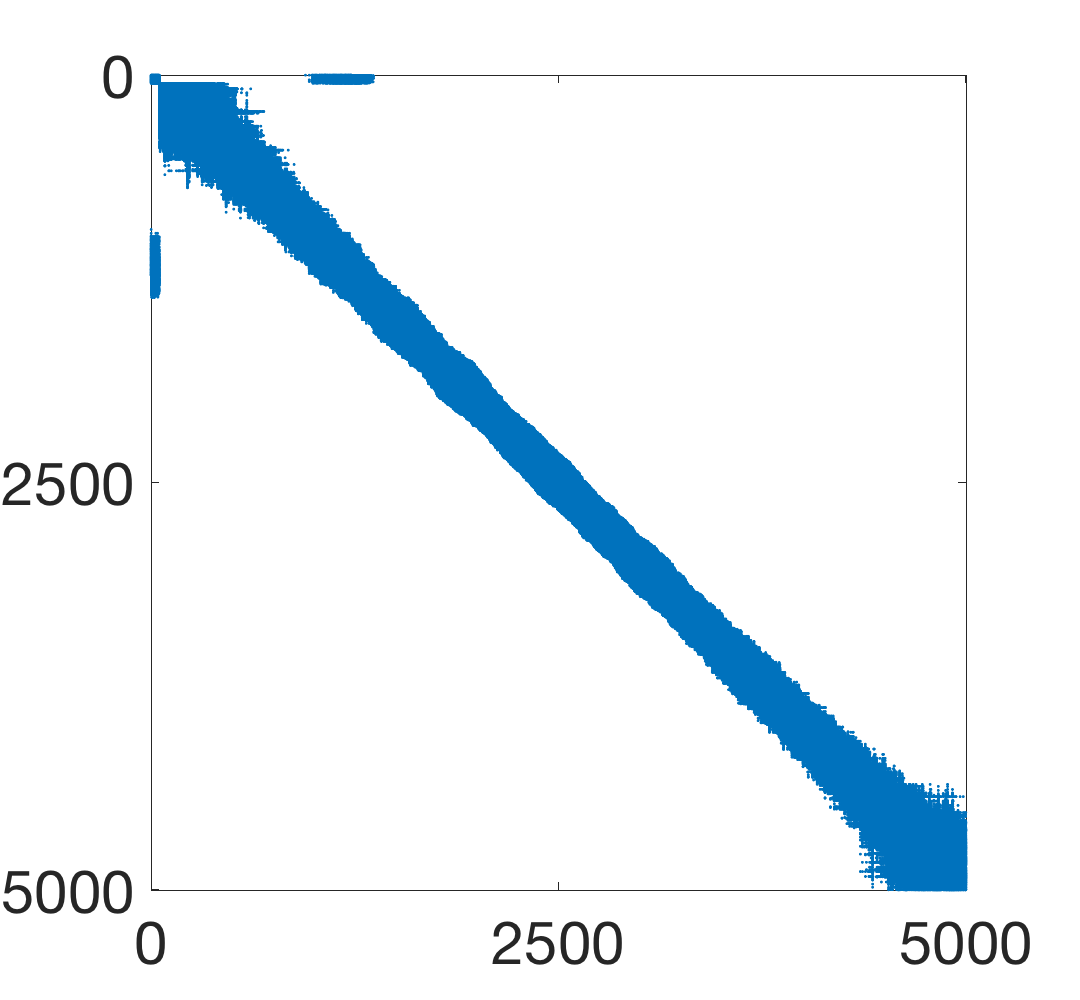}} ~~~~~~
 \subfloat[]{\includegraphics[height=0.25\linewidth]{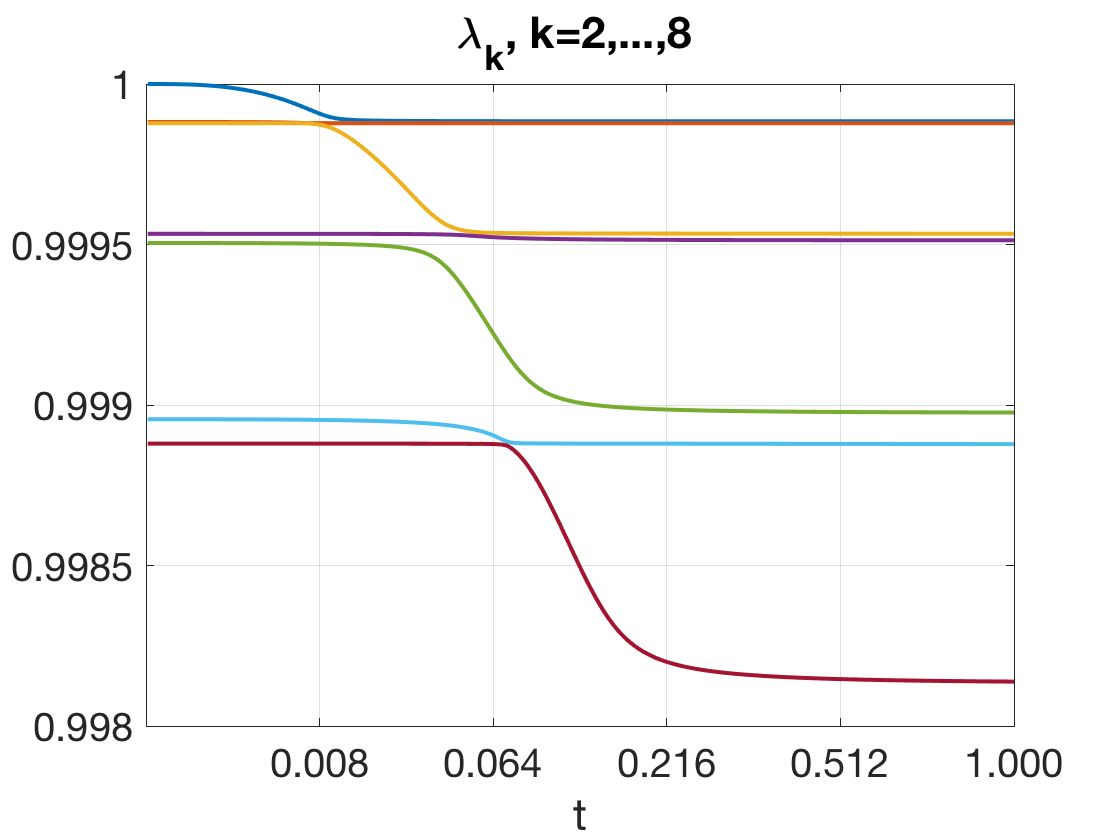}}
 \\
 \makebox[1\textwidth][l]{
  \hspace{-0.8cm}
 \subfloat[]{\includegraphics[height=0.27\linewidth]{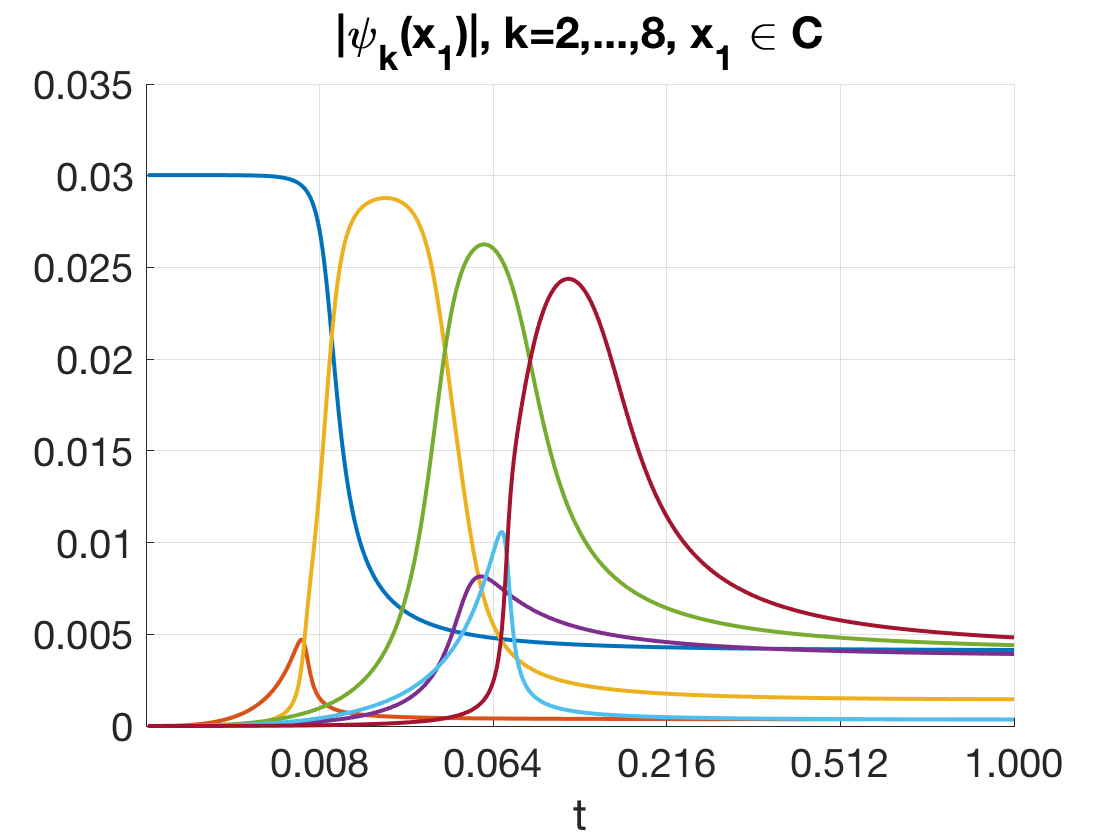}}
  \subfloat[]{\includegraphics[height=0.27\linewidth]{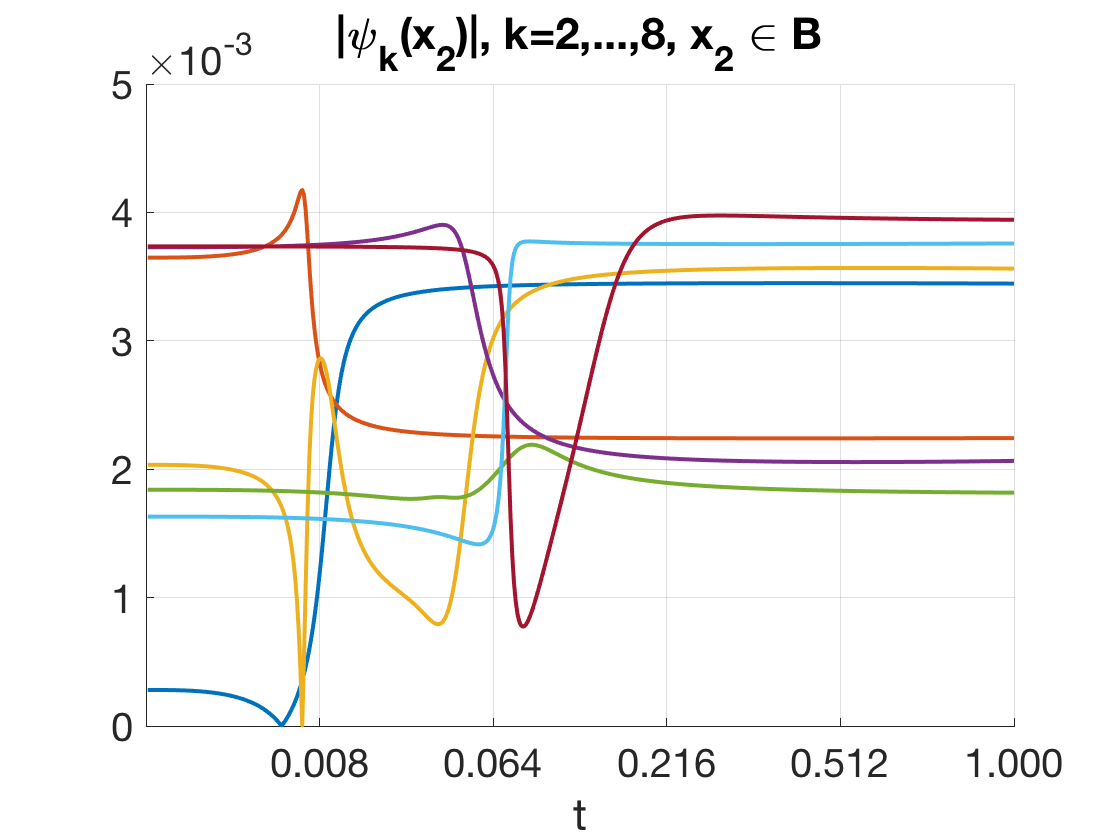}}
  \subfloat[]{\includegraphics[height=0.27\linewidth]{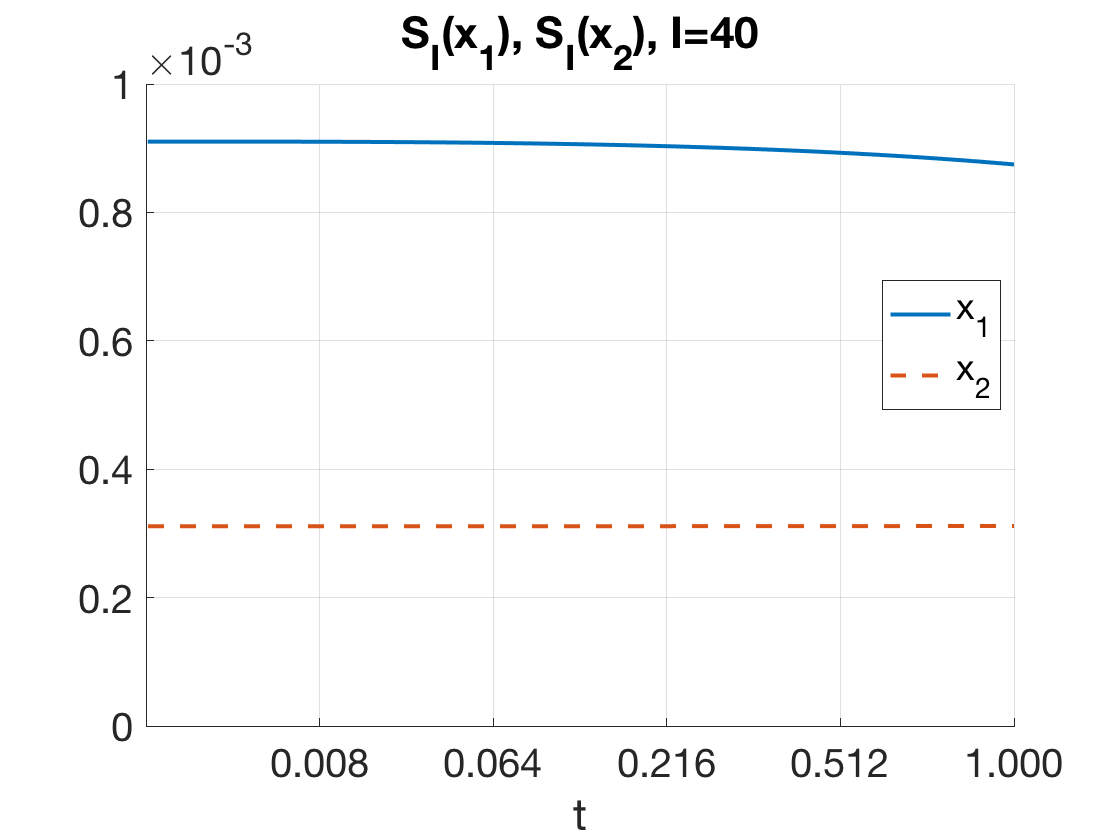}}
  }
\vskip -0.1 in
\caption{ 
\label{fig:toy1}
Plots of eigenvalue and eigenvectors of $P=D^{-1}W$ over time. 
(A) 
$n=5000$ data points in $\R^2$ sampled on ${\cB} \cup {\cC}$, where points in ${\cB}$ lie close to a circle (blue) and
points in ${\cC}$ form the small cluster lying close to the circle (red).
 In this case, $K=1$, $\delta = 0.01$.
 (B) 
 The affinity matrix $W(t)$ at $t=1$, c.f. \eqref{eq:deformW}.
 (C) Plot of the first 8 eigenvalues as $t$ increases from 0 to 1 (excluding $\lambda_1 =1$). 
(D) 
The absolute values of the associated first 8 eigenvectors at $x_1 \in \cC$ over time.
(E) 
Same plot at $x_2 \in \cB$.
(F) The values of the embedding norm $S(x)$ defined in \eqref{eq:def-Sx} 
at $x_1$ and $x_2$ over time, where $|I| = 40$. 
}
%\vskip -0.2 in
\end{figure}

\subsection{A prototypical toy example}\label{subsec:toy}
The prototypical scenario which motivates the proposed method is illustrated in the toy example in Figure \ref{fig:toy1}.
(A) shows data points in $\R^2$ consisting of two groups:
a large group, denoted by $\cB$, which lie close to the unit circle (blue) 
and a small one, denoted by $\cC$, which form a small cluster lying close to the circle (red), and
$\frac{|\cC|}{|\cV|}  = \delta =0.01$.
Variations of this model involve multiple sub-clusters in $\cC$ 
(see Figure \ref{fig:toy2-exp} and the applications on real-world data),
and the qualitative picture is the same. 
The affinity matrix $W$ built from the data is shown in (B).
The first few eigenvalues, evolving over time $t$, are shown in (C),
and the associated eigenvectors evaluated on two nodes, one in $\cC$ and one in $\cB$,
are plotted in (D) and (E).
We sort the eigenvalues of $P$ from large to small, and the first eigenvalue is always 1. 
The {\it embedding norm} $S(x)$ takes
the squared-sum of the first 40 eigenvectors on each node 
(c.f. \eqref{eq:def-Sx}) and is plotted in (F) over time. 
The figure demonstrates that 
\begin{enumerate}
\item 
Though two blocks $\cC$ and $\cB$ exists in the graph,
there is no clear eigen-gap between the second and third eigenvalues.
Actually, the leading eigenvalues are all very close to 1 throughout time $5$ (the eighth eigenvalue is greater than 0.998).
\item 
While the second eigenvector $\psi_2$ distinguishes $\cC$ at short time $t$,
once $t$ is greater than 0.01 $\psi_2$ fails to indicate the cluster $\cC$.
(D) shows the value on one node and it is typical for the value of $\psi_k$'s on $\cC$.
The transition actually happens when the initial gap between $\lambda_2$ and $\lambda_3$ almost vanishes. 
However, the high-index eigenvectors may take large value on $\cC$
(the eigth eigenvector starts to take large magnitude on $\cC$ around $t=0.1$, 
and the trend of high-indexed eigenvectors localizing on $\cC$ continues, which is not shown).
This is evident by $S(x)$ consistently distinguishing $\cC$ from $\cB$ over time, as shown in (F). 
\end{enumerate}

This suggests that when the leading eigenvectors fail to identify the cluster $\cC$, 
the information of the location of $\cC$ may be contained in higher-indexed eigenvectors,
and looking deep into the spectrum may be helpful. 
However, the selection of informative' eigenvectors is generally a challenging problem.
In particular, as shown in (D)(E), 
the deformation of eigenvectors is not stable when eigenvalues get close,
which makes it difficult to study them individually. 
Instead, the embedding norm we proposed varies smoothly over time and preserves a gap between $\cC$ and $\cB$.
As a result,  one can detect $\cC$ from $\cB$ by thresholding the value of $S$ at $t=1$. 
Note that 40 eigenvectors are used in the summation, which is much larger than 2.
We will justify this improved stability in the analysis.

%%%%%%%%%%%%%%
% sec 3
%
\section{Theoretical Analysis of Cluster Detection}
\label{sec:3}

At $t=0$ in \eqref{eq:deformW}, 
the matrix $W_{0}$ has a two-block structure, and the spectrum of the graph Laplacian of $W_{0}$
also splits into two groups, one residing on $\cC$ and the other on $\cB$ respectively. 
However, as $t$ increases,
interactions among the eigenvectors develop and the perfect splitting pattern is no longer preserved.
The {\it embedding norm} varies more stably than individual eigenvectors over time,
and serves as a measure by which to separate $\cC$ from $\cB$ up to time $t=1$.

\subsection{Initial separation by $S$ and assumptions}
Since we will use $S$ to separate $\cB$ and $\cC$, we need it to do so at least at $t=0$
when the two blocks $\cB$ and $\cC$ are perfectly separated by removing all the edges connecting them.
Note that this does not necessarily happen unless certain assumptions are made:
Because the eigenvalues of the $\cB$ block can be close to 1
and the clustering in the $\cC$ block may not be perfect, 
the first $|I|$ eigenvectors may be supported either on $\cC$ or on $\cB$,
and there is generally no guarantee that the squared sum \eqref{eq:def-Sx} will distinguish the two blocks.
We make the following two assumptions: 

(1) At $t=0$, the eigenvectors in $I$ which are supported on $\cB$
are sufficiently delocalized (``flat'') 
and those on $\cC$
are close to the well-clustered case;

(2) The fraction $\delta$ of $|\cC|$ is sufficiently small so that the eigenvectors on $\cC$ are of sufficiently larger magnitude than those on $\cB$,
due to the eigenvector normalization \eqref{eq:evec_ortho}.
The precise condition depends on the choice of $|I|$, the node degrees and so on.

We denote the volume of set $A$ at time $t$ by $\nu(A,t)$ defined as the sum of the degrees at time $t$,
\begin{equation}
\label{eq:def-nu}
\nu(A,t) = \sum_{x \in A} d(x,t),
\quad
d(x,t) = \sum_{y\in \cV } W(x,y;t).
\end{equation}
We also define lower and upper bounds
\begin{equation}\label{eq:def-d0-1}
\underline{d_0} := \min_{x \in \cV } d(x,0),
\quad
\overline{d_0} := \max_{x \in \cV } d(x,0),
\end{equation}
and assume that $\underline{d_0} > 0$.
By construction \eqref{eq:deformW},
the degree $d(x,t)$ of any node monotonically increases over time. 
Thus $\underline{d_0}$ is the universal degree lower-bound:
\begin{lemma}\label{lemma:lower-bound-d}
For all $x\in \cV$ and all $0 \le t \le 1$,
$
d(x,t) \ge \underline{d_0} > 0.
$
\end{lemma}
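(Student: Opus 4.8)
The plan is to exploit that each node's degree is an affine function of $t$ with nonnegative slope, so monotonicity in $t$ is automatic and the stated bound collapses onto the definition of $\underline{d_0}$.

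First I would pin down the sign of the perturbation $E$. By construction \eqref{eq:deformW}, $W_0$ is obtained from $W$ by deleting exactly the edges joining $\cB$ and $\cC$, so $E = W - W_0$ retains precisely the cross-block entries of $W$ and vanishes on the two diagonal blocks. Since $W$ is an affinity matrix with $W(x,y) \ge 0$, this forces $E(x,y) \ge 0$ for every pair $x,y \in \cV$.

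Next I would evaluate the degree directly from \eqref{eq:def-nu}. Summing \eqref{eq:deformW} row-wise gives
\[
d(x,t) = \sum_{y \in \cV} W(x,y;t) = \sum_{y \in \cV} W_0(x,y) + t \sum_{y \in \cV} E(x,y) = d(x,0) + t \sum_{y \in \cV} E(x,y).
\]
The slope $\sum_{y \in \cV} E(x,y)$ is nonnegative by the previous step, and $t \ge 0$, so $d(x,t) \ge d(x,0)$; this is precisely the monotone-increase-in-$t$ assertion made in the surrounding text. Chaining this with $d(x,0) \ge \min_{x' \in \cV} d(x',0) = \underline{d_0}$ from \eqref{eq:def-d0-1} and the standing hypothesis $\underline{d_0} > 0$ yields $d(x,t) \ge \underline{d_0} > 0$ uniformly over $x \in \cV$ and $t \in [0,1]$.

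I expect no genuine obstacle: the lemma is elementary once the sign of $E$ is identified. The single point deserving care is the entrywise nonnegativity $E \ge 0$, which rests solely on the block structure of $W_0$ together with the nonnegativity of the affinities, and requires no spectral or perturbation-theoretic input.
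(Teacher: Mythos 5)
Your proof is correct and matches the paper's own (implicit) argument: the paper justifies the lemma by noting that, by construction \eqref{eq:deformW}, each degree $d(x,t)$ increases monotonically in $t$ because $E = W - W_0$ consists of the nonnegative cross-block entries of $W$, so $d(x,t) \ge d(x,0) \ge \underline{d_0} > 0$. You simply make explicit the entrywise nonnegativity of $E$ and the affine dependence of $d(x,t)$ on $t$, which the paper leaves to the reader.
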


\begin{figure}[t]
%\vskip -0.2 in
\centering{
\includegraphics[width=0.75\linewidth]{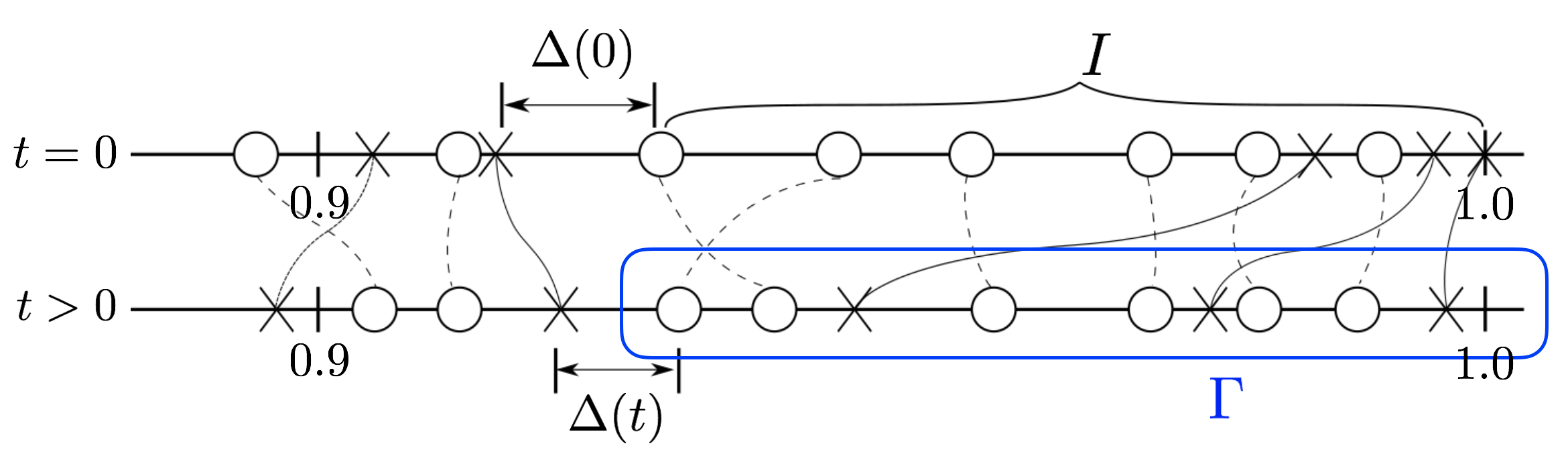}
}
\caption{
Diagram showing the evolution of eigenvalues of the Markov matrix as $W(t)$ changes over time as in \eqref{eq:deformW}.
At $t=0$, circles indicate $\cB$ eigenvalues, and crosses indicate $\cC$ ones. 
In this example, $K=3$, and $|I| = 10$.
Eigenvalues of the $\cB$-submatrix are shown in circles,
and those of $\cC$-submatrix in crosses.
Note that $\cB$ can have eigenvalues close to 1 even at $t=0$.
As $t$ increases, at most time the eigenvalues are all of multiplicity one.
Eigen-crossings may happen within $I$ and $I^c$
but not in between, and the $I$-spectral gap denoted by $\Delta(t)$ is preserved (Proposition \ref{prop:I-gap-preserve}). 
The differential equation \eqref{eq:S-evolve} is obtained by the contour integral $\Gamma$,
which exists for all $t$ due to positive $\Delta(t)$ . 
}
\label{fig:diag1}
\end{figure}

At $t=0$, since the affinity matrix decomposes into two separated blocks $\cB$ and $\cC$, so do the eigenvectors.
We call them initial eigenvectors,
and the set of eigenvectors which are only supported on $\cB$ 
are called the $\cB$-eigenvectors, denoted by $\Psi^{\cB}$,
and similarly for $\cC$-eigenvectors and $\Psi^{\cC}$. 
The assumption on these eigenvectors and the index set $I$ is the following:

\begin{assump}[$\cB$ and $\cC$-eigenvectors]
\label{assump:A1}
At $t=0$,

(a) The index $I$ includes 
$K$ $\cC$-eigenvectors
and $|I|-K$ $\cB$-eigenvectors.

(b) 
Each of the $K$ eigenvectors in $I \cap \Psi^{\cC}$
(up to a $K$-by-$K$ rotation of these $K$ vectors)
is associated with one of the $K$ clusters in the following sense:
There exists $ 0 \le \varepsilon_1 < 1 $, 
and
for each $\psi \in I \cap \Psi^{\cC}$,
there is a unique $j$, $1 \le j \le K$, s.t.
\begin{align}
\frac{1-\varepsilon_1}{ \nu( C_j ,0) }
&\le 
\psi(x)^2
\le
\frac{1+\varepsilon_1}{ \nu( C_j ,0) },
\quad \forall x\in C_j, 
\label{eq:psi-C-bound-1}
\\
\psi(x)^2
&\le 
\frac{\varepsilon_1}{ \nu( C ,0) },
\quad \forall x \in C \backslash C_j,
\label{eq:psi-C-bound-2}
\end{align}

(c) There exists $\varepsilon_2 \ge 0 $, 
s.t. for any $\psi \in I \cap \Psi^{\cB}$, 
\[
\psi(x)^2 
\le
\frac{1+\varepsilon_2}{ \nu(B,0) }, 
\quad \forall x\in B.
\]

\end{assump}

The above assumption, while appearing to be complicated,
poses only generic conditions on the subgraphs $\cB$ and $\cC$:

In the perfectly separated case the largest $K$ $\cC$-eigenvalues 
are 1, and the $(K+1)$-th one is strictly less than 1 
and depends on the mixing time of the Markov chain within each cluster.
This spectral gap is usually significant
since we primarily work with a well-clustered $\cC$ 
which takes a small fraction of nodes and is localized in the graph,
e.g., $\cC$ is an outlier cluster, or several localized regions of interest.
As a result, even when the clustering is not perfect,
the $(K+1)$-th $\cC$-eigenvalue is still sufficiently far away from the first $K$ ones,
and they can be excluded from the index set $I$, since $I$ selects the largest $|I|$ eigenvalues.
This fulfills (a).

If the $K$ clusters in $\cC$ are perfectly separated, 
one can verify that $\varepsilon_1 = 0$ in (b).
Thus (b) holds when $\cC$ is not far from being well-clustered.

Assumption \ref{assump:A1}(c) requires that the eigenvector $\psi$ is sufficiently delocalized, or ``flattened'' on $\cB$:
Recall that~(\ref{eq:evec_ortho}) 
\[
\sum_{x \in \cB} \psi(x)^2 d(x,0) = 1,
\]
and the first eigenvector (associated with eigenvalue 1) takes the constant value $\psi(x)^2 = \frac{1}{\nu(B,0)}$.
If all the other eigenvectors are flattened, then (c) holds with some small $\varepsilon_2$.
The delocalization widely applies when $\cB$ are built from data vectors lying on certain regular manifolds:
assuming that the discrete eigenvectors well approximate the continuous limits 
which are eigenfunctions of the manifold Laplacian, 
the delocalization of the former inherit from that of the latter (Quantum Ergodicity Theorem \cite{zworski2012semiclassical, zelditch2008local}).
When the spectral convergence is poor, the finite-sample effects may create some localized pattern in the ``noisy'' eigenvectors,
however, since $|I|$ is typically a small number compared to $n$,
we assume that the selected $\cB$-eigenvectors are sufficiently close to the population ones.

The second assumption is on the proportion of cluster nodes:
Recall that $\delta = \frac{|\cC|}{ |\cV|}$,

\begin{assump}\label{assump:A2}
The constants $\delta$, $|I|$ and $K$ satisfy that
\begin{equation}\label{eq:assumpA2}
\frac{ \delta}{ 1-\delta } 
	\frac{ |I| - K}{ K }
<
\frac{ \underline{d_0} (1-\varepsilon_2) }{\overline{d_0} (1 + \varepsilon_1)},
\end{equation}
where $\varepsilon_1$, $\varepsilon_2$ are as in Assumption \ref{assump:A1}.
\end{assump}

Prototypical cases where Assumption \ref{assump:A1} and \ref{assump:A2} are satisfied 
include the example in Section \ref{subsec:toy} (Figure \ref{fig:toy1}) and in Section \ref{subsec:4.1} (Figure \ref{fig:toy2-exp}, \ref{fig:toy1-exp}).
Theoretically,
the above two assumptions guarantee that the embedding norm $S(x,0)$ separates the blocks $\cC$ and $\cB$ at time $t=0$,
together with an upper bound of $S(x,0)$ over $\cV$:

\begin{proposition}[Initial separation by $S(x)$]
\label{prop:init-sepa}
Under Assumption \ref{assump:A1}, at time $t=0$,
\begin{align}
\frac{1}{n \overline{d_0}} \frac{K}{\delta}
(1-\varepsilon_1) 
& \le
S(x)
 \le
\frac{1}{n \underline{d_0}} 
\frac{K}{\delta}
(1 + 2 \varepsilon_1),
\quad
\forall x\in \cC.
\label{eq:init-sepa-SC}
\\
 S(x)
& \le
\frac{1}{n \underline{d_0}} \frac{ (1+\varepsilon_2) ( |I| - K) }{1-\delta},
\quad
\forall x \in \cB,
\label{eq:init-sepa-SB} 
\end{align}
If furthermore, Assumption \ref{assump:A2} holds, then

(1) The initial gap between $\cB$ and $\cC$ is at least 
\begin{equation}\label{eq:def-g0-pound}
g_0 
:= 
\frac{1}{n \underline{d_0}} \frac{K}{\delta} (\#),
~
(\#) := \frac{\underline{d_0} (1-\varepsilon_1) }{ \overline{d_0}} 
	- \frac{\delta}{1-\delta} \frac{ |I| - K}{ K } (1+\varepsilon_2),
\end{equation}
that is, $\forall x \in \cC$ and $y \in \cB$, 
$S(x) - S(y) \ge g_0 > 0$.

(2)
At $t=0$, 
\begin{equation}\label{eq:Sx0-upper-bound}
\sup_{x \in \cV} S(x)
\le
\frac{1}{n \underline{d_0}} \frac{K}{\delta} (1+2\varepsilon_1).
\end{equation}
\end{proposition}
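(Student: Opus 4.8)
The plan is to exploit the block-diagonal structure of $W(0)=W_0$. Since $W_0$ decomposes into the two disjoint blocks $\cC$ and $\cB$, every eigenvector of $P(0)$ is supported on exactly one block, so the $\Psi^{\cC}$-vectors vanish on $\cB$ and the $\Psi^{\cB}$-vectors vanish on $\cC$. Consequently the defining sum of $S$ collapses: for $x\in\cC$ only the $K$ vectors of $I\cap\Psi^{\cC}$ contribute, and for $x\in\cB$ only the $|I|-K$ vectors of $I\cap\Psi^{\cB}$ do, by Assumption \ref{assump:A1}(a). The one subtlety to flag up front is that Assumption \ref{assump:A1}(b) holds only up to a $K$-by-$K$ rotation of the $\cC$-eigenvectors; this is harmless because $S(x)=\sum_{\psi}\psi(x)^2$ is the squared Euclidean norm of the embedding coordinates and is therefore invariant under any orthonormal rotation of the summed eigenvectors. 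Hence I may compute $S$ on $\cC$ using the rotated, cluster-localized representatives for which \eqref{eq:psi-C-bound-1}--\eqref{eq:psi-C-bound-2} hold.

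First I would establish \eqref{eq:init-sepa-SC}. Fix $x\in\cC$ and let $C_j$ be the unique cluster containing it. Exactly one of the $K$ localized $\cC$-eigenvectors is associated with $C_j$ and contributes a term in $[\,(1-\varepsilon_1)/\nu(C_j,0),\,(1+\varepsilon_1)/\nu(C_j,0)\,]$ by \eqref{eq:psi-C-bound-1}, while each of the remaining $K-1$ vectors contributes at most $\varepsilon_1/\nu(C,0)$ by \eqref{eq:psi-C-bound-2}. I then insert the elementary volume bounds $\tfrac{\delta n}{K}\underline{d_0}\le \nu(C_j,0)\le \tfrac{\delta n}{K}\overline{d_0}$ and $\nu(C,0)\ge \delta n\,\underline{d_0}$, which follow from $|C_j|=\delta n/K$, $|C|=\delta n$, and $d(x,0)\in[\underline{d_0},\overline{d_0}]$. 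The lower bound uses the single large term with the upper volume bound; the upper bound adds the $K-1$ leakage terms, and the arithmetic $K(1+\varepsilon_1)+(K-1)\varepsilon_1 \le K(1+2\varepsilon_1)$ produces exactly the stated factor $(1+2\varepsilon_1)$. For $x\in\cB$, Assumption \ref{assump:A1}(c) gives $\psi(x)^2\le (1+\varepsilon_2)/\nu(B,0)$ for each of the $|I|-K$ summands, and $\nu(B,0)\ge (1-\delta)n\,\underline{d_0}$ then yields \eqref{eq:init-sepa-SB}.

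For part (1) I subtract the $\cB$ upper bound \eqref{eq:init-sepa-SB} from the $\cC$ lower bound in \eqref{eq:init-sepa-SC} and factor out the common prefactor $\tfrac{1}{n\underline{d_0}}\tfrac{K}{\delta}$; the remaining bracket is precisely the quantity $(\#)$ of \eqref{eq:def-g0-pound}, and its positivity is guaranteed by Assumption \ref{assump:A2}, giving the strict separation $S(x)-S(y)\ge g_0>0$. For part (2) the supremum of $S$ over $\cV$ is attained within one of the two blocks, so it suffices to verify that the $\cB$ upper bound does not exceed the $\cC$ upper bound of \eqref{eq:Sx0-upper-bound}. Rewriting the target as $\tfrac{\delta}{1-\delta}\tfrac{|I|-K}{K}(1+\varepsilon_2)\le 1+2\varepsilon_1$ and invoking $(\#)>0$ bounds the left side by $\underline{d_0}(1-\varepsilon_1)/\overline{d_0}\le 1\le 1+2\varepsilon_1$, since $\underline{d_0}\le\overline{d_0}$, which closes \eqref{eq:Sx0-upper-bound}.

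I expect no genuinely hard step: once the block decomposition and the rotation-invariance of $S$ are in place, the entire argument is bookkeeping. The only places demanding care are consolidating the $K-1$ cross-cluster leakage terms so that they fold cleanly into the $(1+2\varepsilon_1)$ factor, and keeping the two degree bounds ($\underline{d_0}$ versus $\overline{d_0}$) on the correct side of each inequality so that the factored gap matches $(\#)$ verbatim.
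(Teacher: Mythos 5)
Your proof is correct and follows essentially the same route as the paper's: at $t=0$ restrict $S$ to the block-supported eigenvectors via Assumption \ref{assump:A1}(a), use the rotation invariance of the squared sum together with Assumption \ref{assump:A1}(b)(c) and the volume bounds $\underline{d_0}|A| \le \nu(A,0) \le \overline{d_0}|A|$ to obtain \eqref{eq:init-sepa-SC}--\eqref{eq:init-sepa-SB}, then deduce (1) and (2) by comparing the bounds. The only cosmetic difference is that you verify \eqref{eq:Sx0-upper-bound} by an explicit inequality chain using $(\#)>0$, whereas the paper simply notes that claims (1) and (2) follow once the r.h.s.\ of \eqref{eq:init-sepa-SB} sits below the l.h.s.\ of \eqref{eq:init-sepa-SC}.
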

Proof in Section \ref{sec:proof}.

\subsection{Stable deformation of $S$ and separation}
We will prove the stability of $S(x,t)$ over time making use of the Hadamard variation formula for the eigenvalues and eigenvectors,
after properly indexing them. 
Specifically, 
since we assume that $\underline{d_0} > 0$, the diagonal matrix $D$ is invertible throughout time,
and the Markov matrix $P=D^{-1}W$ is diagonalizable and similar to $D^{-1/2}WD^{-1/2}$.
Under the matrix perturbation model \eqref{eq:deformW} which is linear in $t$,
the $n$ eigenvalues of the Markov matrix $P$ can be indexed as $\lambda_1(t), \cdots, \lambda_n(t)
$,
so that they are descending at $t=0$, i.e. $\lambda_{k+1}(0) \le \lambda_k(0)$,
 and differentiable with respect to $t$ for $0 \le t \le 1$ (Chapter 2 of \cite{kato2013perturbation}). 
 Similar to the classical Hadamard variation formula,
 the evolution equation of $\lambda_k$ can be shown to be
\begin{equation}\label{eq:lambda-evolve}
\dot{\lambda}_{k} 
 = \psi_{k}^{T}( \dot{W} -\lambda_k \dot{D} )\psi_{k},
\end{equation}
and the equation of the associated eigenvector $\psi_k$ is, when valid,
\begin{equation}\label{eq:psi-evolve}
\dot{\psi}_{k} 
 = 
 -\frac{1}{2} (\psi_{k}^{T}\dot{D}\psi_{k}) \psi_k
 +\sum_{j\neq k}\frac{\psi_{j}^{T}(\dot{W}-\lambda_{k}\dot{D})\psi_{k}}{\lambda_{k}-\lambda_{j}}\psi_{j},
\end{equation}
that is, \eqref{eq:psi-evolve} holds on time intervals when no eigen-crossing of any pair of $\lambda_k$ and $\lambda_j$ happens. 
The derivation of \eqref{eq:lambda-evolve}, \eqref{eq:psi-evolve} is left to Appendix \ref{app:derive}. 

Though the $n$ eigenvalues are ordered from large to small at $t=0$, 
an eigen-crossing (or neighboring eigenvalues becoming very close) 
may happen as $t$ increases,
as illustrated in the diagram in Figure \ref{fig:diag1},
and numerically in the toy example in Figure \ref{fig:toy1}.
This voids a direct adoption of \eqref{eq:psi-evolve} 
unless one shows that the singularity does not affect the differentiability of the 
eigenvector branches before and after the crossing, 
which is still possible in our setting \cite{kato2013perturbation}.
However, even if \eqref{eq:psi-evolve} can be made valid with such an effort,
when an eigen-crossing or a near crossing happens
there is generally no control on the speed of change of the associated pair of eigenvectors.
Some steep changes of eigenvectors are shown in the toy example in Figure \ref{fig:toy1},
at times of (near) eigen-crossings. 
This instability of eigenvectors under matrix perturbation underlies the main difficulty 
to justify the use of leading eigenvectors in this environment,
for both theoretical analysis and algorithms. 

The main observation of this work is to overcome such instability by considering the spectral embedding norm instead of individual eigenvectors.
A key quantity needed in the stability bounds (of both the eigenvalues and the embedding norm) 
is the $\cC$-$\cB$ ``connection strength'', measured by 
\begin{equation}
C:=\sum_{x\in B,\,y\in C}W(x,y).
\label{eq:def-C}
\end{equation}
The analysis needs $C$ to be a small compared to the magnitude of the node degrees, 
specifically, $\frac{C}{\underline{d_0}}$ needs to be a small constant. 
We note that the condition may be much stronger than encountered in applications
due to the reliance on a spectral gap between $I$ and  $I^c$ eigenvalues.
To be specific, we define the $I$-eigen-gap (depending on time $t$) to be 
\begin{equation}
\Delta(t):=\min_{i\in I,\,j\notin I}|\lambda_{i}(t)-\lambda_{j}(t)|,\quad t\in[0,1].
\label{eq:def-Deltat}
\end{equation}
Such an ``$I$-eigen-gap'' prevents eigenvalues from $I$ and $I^c$ to get too close,
but allows arbitrary eigen-crossings within $I$ and within $I^c$.
While needed in the perturbation analysis,
we note that $\Delta(t)$ should be viewed as an artifact due to the limitation of our theory 
(see remark after Theorem \ref{thm:S-sepa}).
However, this is  essentially different from the traditional spectral gap assumed after the $K$-th eigenvalue. 
All proofs in this section are in Section \ref{sec:proof}.

The following proposition proves the preserved $I$-eigen-gap assuming an initial one,
based upon the stable evolution of eigenvalues c.f. \eqref{eq:lambda-evolve}. 
\begin{proposition}[Preservation of $I$-eigen-gap]
\label{prop:I-gap-preserve}
Under (A1), $C$ as in \eqref{eq:def-C}, if for some constant $\Delta>0$, 
$\Delta(0)\ge2\Delta$ and 
\begin{equation}
\frac{C}{ \underline{d_0} }
\le
\frac{1}{8}\Delta,
\label{eq:A4-cond-C}
\end{equation}
then 
\[
\Delta(t)\ge\Delta,\quad\forall0\le t\le1.
\]
\end{proposition}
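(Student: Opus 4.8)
The plan is to control the motion of each eigenvalue through the Hadamard-type evolution formula \eqref{eq:lambda-evolve} and then apply the triangle inequality to show that, over the unit time interval, no eigenvalue indexed in $I$ can approach one indexed in $I^c$. Since we are told the eigenvalues $\lambda_k(t)$ are differentiable on all of $[0,1]$ (the eigen-crossing difficulty afflicts only the eigenvector equation \eqref{eq:psi-evolve}, not \eqref{eq:lambda-evolve}), it suffices to establish a uniform bound $|\dot\lambda_k(t)| \le M$ for every fixed index $k$ and every $t$, and then integrate. Crucially, this strategy never needs to control distances \emph{within} $I$ or \emph{within} $I^c$, which matches the fact that arbitrary eigen-crossings are permitted there.

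First I would substitute $\dot W = E$ and the diagonal matrix $\dot D_{xx} = \sum_y E(x,y)$ into \eqref{eq:lambda-evolve}, obtaining
\[
\dot\lambda_k = \sum_{x,y} E(x,y)\,\psi_k(x)\psi_k(y) \;-\; \lambda_k \sum_x \psi_k(x)^2\, \dot D_{xx}.
\]
Because $E$ retains only the cross-block entries of $W$ and is symmetric with nonnegative entries, applying $2|ab|\le a^2+b^2$ to the first (off-diagonal) sum and using $\sum_x E(x,y) = \dot D_{yy}$ bounds its absolute value by $\sum_x \psi_k(x)^2 \dot D_{xx}$. Together with $|\lambda_k|\le 1$ this yields the uniform estimate $|\dot\lambda_k| \le (1+|\lambda_k|)\sum_x \psi_k(x)^2 \dot D_{xx} \le 2\sum_x \psi_k(x)^2 \dot D_{xx}$.

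Next I would tie this quantity to the connection strength $C$ of \eqref{eq:def-C}. The normalization \eqref{eq:evec_ortho} gives $\sum_x \psi_k(x)^2 d(x,t) = 1$, and Lemma \ref{lemma:lower-bound-d} gives $d(x,t)\ge \underline{d_0}>0$, so pointwise $\psi_k(x)^2 \le 1/\underline{d_0}$. Since $\sum_x \dot D_{xx} = \sum_{x,y} E(x,y) = 2C$, I get $\sum_x \psi_k(x)^2 \dot D_{xx} \le 2C/\underline{d_0}$ and hence $|\dot\lambda_k| \le 4C/\underline{d_0}$ uniformly in $k$ and $t$. Under hypothesis \eqref{eq:A4-cond-C} this gives $M := 4C/\underline{d_0} \le \tfrac12\Delta$. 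Finally, for any fixed index $k$, $|\lambda_k(t)-\lambda_k(0)| \le \int_0^t |\dot\lambda_k|\,ds \le M \le \tfrac12\Delta$, and for $i\in I$, $j\in I^c$ the triangle inequality with $\Delta(0)\ge 2\Delta$ gives
\[
|\lambda_i(t)-\lambda_j(t)| \ge |\lambda_i(0)-\lambda_j(0)| - |\lambda_i(t)-\lambda_i(0)| - |\lambda_j(t)-\lambda_j(0)| \ge 2\Delta - \tfrac12\Delta - \tfrac12\Delta = \Delta,
\]
so $\Delta(t)\ge\Delta$ for all $t\in[0,1]$.

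I expect the main obstacle to be the second step, namely bounding the off-diagonal term uniformly by a multiple of $C/\underline{d_0}$, since this is exactly what converts the abstract eigenvalue motion into the concrete connection-strength hypothesis \eqref{eq:A4-cond-C}. The two decisive tricks there are the AM--GM symmetrization of $\psi_k(x)\psi_k(y)$ against the symmetric nonnegative matrix $E$, and the pointwise bound $\psi_k(x)^2\le 1/\underline{d_0}$ extracted from the $D$-normalization and the degree lower bound. Once these are in place the integration and the triangle-inequality closing are routine.
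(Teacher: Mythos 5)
Your proof is correct and follows essentially the same route as the paper's: a uniform velocity bound $|\dot\lambda_k|\le 4C/\underline{d_0}$ derived from \eqref{eq:lambda-evolve}, integration over $[0,1]$, and a triangle-inequality comparison against $\Delta(0)\ge 2\Delta$. The only cosmetic difference is that you symmetrize the cross term via $2|ab|\le a^2+b^2$ and get the pointwise bound $\psi_k(x)^2\le 1/\underline{d_0}$ from the single-eigenvector normalization $\sum_x\psi_k(x)^2 d(x,t)=1$, whereas the paper bounds $|\psi_l^T E\psi_k|$ and $|\psi_l^T\dot D\psi_k|$ directly using $\psi_k(x)^2\le\sum_j\psi_j(x)^2=1/d(x)$; both yield the identical constant.
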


The significance of the preserved $I$-eigen-gap is that 
we can derive the evolution equation of the embedding norm $S(x,t)$ without being concerned with the eigen-crossings within $I$ (and within $I^c$). 
This is possible by relying on $S(x)$ being the $(x,x)$-th diagonal entry of the spectral projection matrix $P_I :=\sum_{k \in I} \psi_k \psi_k^T$,
which can be written in form of a contour integral of the resolvent in the complex plane where the contour 
circles the eigenvalues in $I$ throughout $t \in [0,1]$, as illustrated in Figure \ref{fig:diag1}. 
The evolution equation below only requires eigenvalue difference $\lambda_{k}-\lambda_{j}$ to be non-vanishing 
when one is from $I$ and the other is from $I^c$.
Actually, this difference is bounded from below by the constant $\Delta$ by Proposition \ref{prop:I-gap-preserve}. 
\begin{proposition}[Evolution of $S$]
\label{prop:S-evolve}
When Proposition \ref{prop:I-gap-preserve} applies, for $0 < t < 1$,
\begin{equation}
\label{eq:S-evolve}
\begin{split}
\frac{\partial}{\partial t} S(x,t)
&  = 
 -\sum_{k\in I,\, j \in I} (\psi_{j}^{T}\dot{D}\psi_{k}) \psi_k(x)\psi_j(x)  + 2 
 \sum_{k\in I, \, j \notin I}
 \frac{\psi_{j}^{T} ( \dot{W} -\lambda_k \dot{D} )\psi_{k}}{\lambda_{k}-\lambda_{j}} \psi_k(x) \psi_{j}(x).
 \end{split}
\end{equation}
\end{proposition}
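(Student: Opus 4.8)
The plan is to derive \eqref{eq:S-evolve} by differentiating the defining sum $S(x,t)=\sum_{k\in I}\psi_k(x)^2$ term by term, substituting the eigenvector variation \eqref{eq:psi-evolve} on crossing-free intervals, and then removing the resulting spurious singularities through a symmetrization; the step that legitimizes the argument across the eigen-crossings permitted inside $I$ is the contour-integral representation of the spectral projection. On any interval where no pair of eigenvalues crosses, \eqref{eq:psi-evolve} is valid and term-by-term differentiation (using $\dot S=\sum_{k\in I}2\psi_k\dot\psi_k$) gives
\[
\dot S(x,t)=-\sum_{k\in I}(\psi_k^T\dot D\psi_k)\,\psi_k(x)^2+2\sum_{k\in I}\sum_{j\neq k}\frac{\psi_j^T(\dot W-\lambda_k\dot D)\psi_k}{\lambda_k-\lambda_j}\,\psi_k(x)\psi_j(x).
\]
I would then split the inner sum according to whether $j\notin I$ or $j\in I$ (with $j\neq k$). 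The $j\notin I$ block is already the second sum of \eqref{eq:S-evolve}; its denominators $\lambda_k-\lambda_j$ with $k\in I,\,j\notin I$ are bounded below in absolute value by $\Delta$ via Proposition \ref{prop:I-gap-preserve}, so it is harmless.

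The crux is to show that the diagonal term together with the $j\in I,\,j\neq k$ cross terms collapse to the first sum of \eqref{eq:S-evolve}. Writing $a_{kj}:=(\psi_j^T(\dot W-\lambda_k\dot D)\psi_k)/(\lambda_k-\lambda_j)$ and using that $\dot W=E$ and $\dot D$ are symmetric, so that $w:=\psi_j^T\dot W\psi_k$ and $d:=\psi_j^T\dot D\psi_k$ are invariant under $k\leftrightarrow j$, a one-line computation yields
\[
a_{kj}+a_{jk}=\frac{(w-\lambda_k d)-(w-\lambda_j d)}{\lambda_k-\lambda_j}=-d=-\psi_j^T\dot D\psi_k.
\]
Since $\psi_k(x)\psi_j(x)$ is symmetric in $(k,j)$, the ordered double sum over $k,j\in I$, $k\neq j$, of $2a_{kj}\psi_k(x)\psi_j(x)$ equals $\sum_{k\neq j}(a_{kj}+a_{jk})\psi_k(x)\psi_j(x)=-\sum_{k\neq j}(\psi_j^T\dot D\psi_k)\psi_k(x)\psi_j(x)$; adjoining the diagonal $k=j$ contribution reproduces $-\sum_{k,j\in I}(\psi_j^T\dot D\psi_k)\psi_k(x)\psi_j(x)=-(P_I\dot D P_I)_{xx}$, the first sum of \eqref{eq:S-evolve}. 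Crucially, the dangerous denominators with $k,j\in I$ cancel in this step, so no internal spectral gap is needed.

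The main obstacle is that \eqref{eq:psi-evolve} is valid only away from eigen-crossings, whereas crossings inside $I$ (and inside $I^c$) are allowed and do occur, so the computation above holds only on a dense open set of times. I would resolve this exactly as the discussion preceding the proposition indicates: represent $P_I=\sum_{k\in I}\psi_k\psi_k^T$ by the contour integral of the resolvent of $P$ around a fixed contour $\Gamma$ enclosing precisely the eigenvalues indexed by $I$. Such a $\Gamma$ exists for every $t\in[0,1]$ because $\Delta(t)\ge\Delta>0$ by Proposition \ref{prop:I-gap-preserve}, and differentiation under the integral sign shows that $t\mapsto S(x,t)=(P_I)_{xx}$ is $C^1$ on $[0,1]$, so the left-hand side of \eqref{eq:S-evolve} is continuous everywhere. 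On the right-hand side, both sums are invariant under relabelling of eigenvectors within $I$ and within $I^c$ and, having shed the $k,j\in I$ denominators, involve only differences $\lambda_k-\lambda_j$ with $k\in I,\,j\notin I$ bounded away from zero; hence the right-hand side is also continuous across the permitted crossings. Having established the identity on the open dense set where the $I$-eigenvalues are simple, continuity of both sides extends it to all $t\in(0,1)$, completing the proof.
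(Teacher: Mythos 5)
Your route is genuinely different from the paper's proof. The paper never differentiates individual eigenvectors: it computes $\dot P_I$ directly from the contour-integral representation $P_I=-\frac{1}{2\pi i}\oint_\Gamma R(z)\,dz$ with $R(z)=(W-zD)^{-1}$, using $\dot R=-R(\dot W-z\dot D)R$ and Cauchy's formula to evaluate the coefficients $\alpha_{kl},\beta_{kl}$ case by case; \eqref{eq:S-evolve} then falls out with no simplicity or crossing assumption at any time. You instead take the route the paper explicitly mentions and declines (differentiate $\sum_{k\in I}\psi_k(x)^2$ via \eqref{eq:psi-evolve}, then cancel), and your symmetrization identity $a_{kj}+a_{jk}=-\psi_j^T\dot D\psi_k$ is correct: it is precisely the mechanism by which the intra-$I$ denominators disappear, and that portion of your argument is sound.

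The gap lies in the passage from crossing-free times to all times. First, you need the crossing-free times to form a dense set. For the analytic family \eqref{eq:deformW}, two eigenvalue branches either cross at isolated times or coincide identically, and nothing in the hypotheses (Assumption \ref{assump:A1}, the initial gap, the bound on $C$) excludes permanently degenerate pairs inside $I$ or inside $I^c$ — for instance ones produced by a graph automorphism preserved by $E$. In that case your "open dense set" is empty, \eqref{eq:psi-evolve} is valid on no interval whatsoever (its denominators vanish identically), and continuity extends nothing. Second, continuity of the right-hand side at a crossing does not follow from "invariance under relabelling" alone: with eigenvalue-sorted labels the eigenvectors jump at a crossing, and what is actually required is (i) invariance under orthogonal changes of basis within a degenerate eigenspace — stronger than permutation invariance, though true here, since the first sum is $-(P_I\dot D P_I)_{xx}$ and the $I$-side of the second sum collapses onto eigenspace projections when eigenvalues inside $I$ coincide — together with (ii) continuity in $t$ of the cluster spectral projections, which itself comes from perturbation theory or from the resolvent representation. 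Both holes are patchable (e.g., via Rellich-analytic branches, for which the coupling $\psi_j^T(\dot W-\lambda_k\dot D)\psi_k$ of a permanently degenerate pair vanishes identically), but as written your proof is incomplete exactly at the degenerate configurations the proposition is designed to accommodate; avoiding this machinery altogether is what the paper's contour-integral computation buys.
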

We then derive the main result: 

\begin{theorem}[Separation at $t=1$]
\label{thm:S-sepa}
Under (A1)-(A3), if for some constant $\Delta>0$, the following conditions are satisfied:

(i) $\Delta(0)\ge2\Delta$, $\Delta(t)$ as in \eqref{eq:def-Deltat},

(ii) $C$ as in \eqref{eq:def-C},
\begin{equation}\label{eq:cond2-thm}
\frac{ C }{ \underline{d_0} } 
\le
\frac{\Delta}{8}
\frac{1}{1+\frac{\Delta}{4}} 
\log \left( 
	1 + \frac{1}{2} \cdot 
 	\frac{ (\#) }{ 1+ 2 \varepsilon_1 } 
	\right)
\end{equation}
where $(\#)$ is defined in \eqref{eq:def-g0-pound},
 and $(\#) > 0$ under Assumption \ref{assump:A2}.

Then the two parts $\cB$ and $\cC$ can be separated by thresholding
the embedding norm, i.e., there exists a constant $\tau$ s.t. at $t=1$
\begin{align*}
S(x) & > \tau, \quad\forall x\in \cC,\\
S(x) & < \tau, \quad\forall x\in \cB.
\end{align*}
\end{theorem}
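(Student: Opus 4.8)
The plan is to run a Gronwall-type argument on the time-dependent embedding norm, using the initial separation of Proposition \ref{prop:init-sepa} as the starting point and the evolution equation of Proposition \ref{prop:S-evolve} to show that $S(\cdot,t)$ cannot drift far enough to destroy the separation before $t=1$. Throughout, the $I$-eigen-gap stays bounded below by $\Delta$ (Proposition \ref{prop:I-gap-preserve}), so the spectral projection $P_I=\sum_{k\in I}\psi_k\psi_k^T$, and hence $S(x,t)=P_I(x,x)$, is a smooth function of $t$ on all of $[0,1]$ and \eqref{eq:S-evolve} may be integrated through the within-$I$ and within-$I^c$ crossings. I will write $M(t):=\sup_{z\in\cV}S(z,t)$ and abbreviate the uniform band bound $M_0:=\frac{1}{n\underline{d_0}}\frac{K}{\delta}(1+2\varepsilon_1)$ from \eqref{eq:Sx0-upper-bound}, so that $M(0)\le M_0$. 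I also record $\dot d(z):=\partial_t d(z,t)=\sum_y E(z,y)\ge 0$, whose total mass is $\sum_z\dot d(z)=2C$ with $C$ as in \eqref{eq:def-C}.

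The core step is to bound the right-hand side of \eqref{eq:S-evolve} by $\rho\,M(t)$, where $\rho:=\big(\tfrac{8}{\Delta}+2\big)\tfrac{C}{\underline{d_0}}=\tfrac{8}{\Delta}\big(1+\tfrac{\Delta}{4}\big)\tfrac{C}{\underline{d_0}}$. I would treat the two sums separately. The first sum rewrites as $-\sum_{z}\dot d(z)\,P_I(x,z)^2\le 0$; bounding $P_I(x,z)^2\le S(x)S(z)$ by Cauchy--Schwarz and using $\sum_z\dot d(z)/d(z,t)\le 2C/\underline{d_0}$ (since $d(\cdot,t)\ge\underline{d_0}$ by Lemma \ref{lemma:lower-bound-d}) yields a contribution of size at most $\tfrac{2C}{\underline{d_0}}S(x)\le\tfrac{2C}{\underline{d_0}}M(t)$, which accounts for the additive $2$ in $\rho$. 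For the second (coupling) sum I would pull out the gap, $|\lambda_k-\lambda_j|\ge\Delta$, and apply Cauchy--Schwarz across the $I$/$I^c$ splitting, so that the $x$-dependence factors into $\sqrt{S(x)}$ times the Frobenius norm of the numerator matrix $N_{jk}:=\psi_j^T(\dot W-\lambda_k\dot D)\psi_k$. Because $\dot W=E$ and $\dot D$ both have total mass of order $C$ and are divided by degrees bounded below by $\underline{d_0}$, I expect $\|N\|_F\lesssim C/\underline{d_0}$, and combining with $\Delta^{-1}$ this aims to land the coupling sum at $\tfrac{8}{\Delta}\tfrac{C}{\underline{d_0}}M(t)$.

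Granting $|\partial_t S(x,t)|\le\rho\,M(t)$, the envelope inequality $\dot M(t)\le\rho\,M(t)$ integrates to $M(t)\le M_0e^{\rho t}$, whence for every $x$ one has $|S(x,1)-S(x,0)|\le\int_0^1\rho M(t)\,dt\le M_0(e^{\rho}-1)$. Applying this to $x\in\cC$ with the lower band \eqref{eq:init-sepa-SC} and to $y\in\cB$ with the upper band \eqref{eq:init-sepa-SB}, the separation at $t=1$ persists provided $g_0>2M_0(e^{\rho}-1)$, with $g_0$ the initial gap \eqref{eq:def-g0-pound}. Since $g_0/(2M_0)=\tfrac12(\#)/(1+2\varepsilon_1)$, this is precisely $e^{\rho}-1<\tfrac12(\#)/(1+2\varepsilon_1)$, i.e. $\rho<\log\!\big(1+\tfrac12(\#)/(1+2\varepsilon_1)\big)$; substituting the value of $\rho$ recovers hypothesis (ii), \eqref{eq:cond2-thm}. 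One then takes any threshold $\tau$ strictly between $\max_{\cB}S(\cdot,1)$ and $\min_{\cC}S(\cdot,1)$.

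The main obstacle is the middle step: obtaining the coupling-sum bound in multiplicative form, uniformly in $t$. The complementary mass $S^c(x)=\sum_{j\notin I}\psi_j(x)^2=\tfrac{1}{d(x,t)}-S(x)$ is generically of order $1/\underline{d_0}$ on the background, far larger than $S(x)$, so a naive Cauchy--Schwarz estimate $\sqrt{S(x)\,S^c(x)}$ is too lossy and scales with the wrong power of $n$. Making the estimate tight enough to reach $\rho\,M(t)$ requires exploiting the cross-edge structure of $E$ (it couples only $\cB$ and $\cC$) together with the initial localization of the eigenvectors to control $\|N\|_F$, and this is exactly where the smallness of $C/\underline{d_0}$ and the preserved gap $\Delta$ do the work. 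The remaining pieces — smoothness of $S(\cdot,t)$ through the crossings and the final Gronwall integration — are routine once this bound is established.
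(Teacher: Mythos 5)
Your overall skeleton coincides with the paper's proof: the Gronwall argument on $\bar S(t)=\sup_{x}S(x,t)$, the constant $\rho=\bigl(2+\tfrac{8}{\Delta}\bigr)\tfrac{C}{\underline{d_0}}$ (identical to the paper's $\tilde C=\bigl(1+\tfrac{4}{\Delta}\bigr)\tfrac{2C}{\underline{d_0}}$), the final reduction of $g_0\ge 2\bar S(0)(e^{\rho}-1)$ to hypothesis (ii) via $g_0/\bar S(0)=(\#)/(1+2\varepsilon_1)$, and your treatment of the within-$I$ term (the identity $\sum_{k,j\in I}(\psi_j^T\dot D\psi_k)\psi_k(x)\psi_j(x)=\sum_{z}\dot d(z)P_I(x,z)^2$ followed by $P_I(x,z)^2\le S(x)S(z)$ and $S(z)\le 1/d(z)$ is exactly the paper's estimate). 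However, the proof is not complete: the coupling sum over $k\in I$, $j\notin I$ is where the whole difficulty sits, and there you only propose a route --- Cauchy--Schwarz across the $I/I^c$ splitting against the Frobenius norm of $N_{jk}=\psi_j^T(\dot W-\lambda_k\dot D)\psi_k$ --- which, as you yourself concede, produces the factor $\sqrt{S(x)S^c(x)}$ with $S^c(x)\approx 1/\underline{d_0}$ on the background. This is off from the needed multiplicative bound $\rho\,\bar S(t)$ by the factor $1/\sqrt{\bar S\,\underline{d_0}}\sim\sqrt{n\delta/K}$, which grows with $n$. Your hope that controlling $\|N\|_F$ via "the cross-edge structure of $E$ together with the initial localization of the eigenvectors" will rescue this cannot work: the loss does not come from $\|N\|_F$ (which can indeed be shown to be $O(C/\underline{d_0})$) but from the $\sqrt{S^c(x)}$ factor itself, and localization of individual eigenvectors at positive times is precisely what is unavailable here --- their instability under the deformation is the reason the paper works with $S$ rather than with eigenvectors in the first place.

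The paper closes this step by never performing a global Cauchy--Schwarz over the index pairs $(k,j)$. Instead it expands the numerator entrywise over the cross edges first, $\psi_j^T E\psi_k=\sum_{y,z}E(y,z)\psi_j(y)\psi_k(z)$, and exchanges the order of summation so that each edge $(y,z)$ contributes
\[
\Bigl(\sum_{k\in I}|\psi_k(z)||\psi_k(x)|\Bigr)\Bigl(\sum_{j\notin I}|\psi_j(y)||\psi_j(x)|\Bigr)
\le \sqrt{S(z)S(x)}\cdot\frac{1}{\sqrt{d(y)d(x)}}
\le \frac{\bar S(\tau)}{\underline{d_0}},
\]
where Cauchy--Schwarz is applied to the $I$-sum restricted to $I$ (yielding two $\sqrt S$ factors, one at $x$ and one at the edge endpoint $z$), and only the $I^c$-sum is relaxed to the full spectrum (yielding $1/\sqrt{d(y)d(x)}$ via $\sum_{j=1}^n\psi_j(y)^2=1/d(y)$). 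The lossy factor $1/\sqrt{d(x)}$ is thus always paired with a $\sqrt{S(z)}$ from the other endpoint, so the per-edge bound is already multiplicative in $\bar S$; summing over the cross-edge mass $\sum_{y,z}E(y,z)=2C$ and using $|\lambda_k-\lambda_j|\ge\Delta$ gives $\tfrac{2}{\Delta}\tfrac{2C}{\underline{d_0}}\bar S(\tau)$ for the $\dot W$ part, and an identical argument (using $|\lambda_k|\le1$ and $\dot D(y)=\sum_z E(y,z)$) gives the same bound for the $\lambda_k\dot D$ part. Together with your term-I bound this yields exactly $|\partial_t S(x,t)|\le\rho\,\bar S(t)$. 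This per-edge factorization is the missing idea; once it is in place, the rest of your argument goes through essentially verbatim.
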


In practice, $\tau$ can be set to certain quantile of the empirical values of $S(x)$ on all the nodes.

We now make a few comments on the assumptions needed in Theorem \ref{thm:S-sepa}.
Firstly, the r.h.s. of~\eqref{eq:cond2-thm} is technical,
and we show that in the typical setting it is not more restrictive than \eqref{eq:A4-cond-C} 
(and it implies the latter, shown in the proof): 
Note that the r.h.s. is greater than (using that $\log(1+\frac{x}{2}) > \frac{2}{5}x$ for $0 < x \le 1$)
\begin{equation}\label{eq:cond-C-ii-1}
\frac{\Delta}{8} \cdot \frac{1}{1+\frac{\Delta}{4}}  \frac{2}{5} 
\cdot
\frac{ (\#) }{ 1+ 2 \varepsilon_1 },
\end{equation}
and thus unless $(\#)$
is too small, this term would be comparable to $\frac{\Delta}{8}$.
To be specific,
suppose that $\delta$ is so small that 
the first term in the formula of $(\#)$ \eqref{eq:def-g0-pound} dominates,
which makes $(\#)$ approximately
$\frac{ \underline{d_0} }{ \overline{d_0} }$, 
assuming that $\varepsilon_1$ and $\varepsilon_2$ are small constants.
This is reasonable since we typically apply the proposed method when the initial separation is large,
where the initial gap $g_0 = \frac{K}{n \underline{d_0}} \frac{ (\#) }{\delta}$.
Furthermore, in such cases,
if the graph has balanced degree,
 i.e.,
 $ \underline{d_0} \approx \overline{d_0} $,
 then 
$(\#)$ would be close to 1. 
Combined with $\Delta$ being small, 
e.g., $\Delta < 0.1$,
\eqref{eq:cond-C-ii-1} is then approximately $0.39 \cdot \frac{\Delta}{8}$. 

Furthermore, we note that, in practice, 
the embedding norm $S$ can successfully separate $\cC$ from $\cB$
even when the spectral gap requirement and the condition on small $C$ required by the theorem are not satisfied,
see e.g. Figure \ref{fig:toy2-exp}. 
This suggests that the analysis here is likely to be not tight:
for one thing, 
the relaxation of the term $\frac{1}{(\lambda_k -\lambda_j)}$ for $k \in I$, $j\in I^c$ by $\frac{1}{\Delta}$ is crude,
and can be improved, e.g., under proper assumptions of the eigenvalue distribution.
We conjecture that the requirement on $C$ in \eqref{eq:cond2-thm} and the need for an $I$-eigen gap 
 are more restrictive than what occurs in practical applications,
 and further analysis should be able to relax these constraints.

\subsection{Extensions of the analysis}
The main result Theorem \ref{thm:S-sepa} extends to the following cases,
with proof sketches given.

1. 
{\it Weighed embedding norm}.
The definition of the embedding norm $S$ can be generalized as 
\begin{equation}\label{eq:def-S-weighted}
S(x) = \sum_{k \in I} f(\lambda_k) \psi_k(x)^2,
\end{equation}
where $f(\lambda)$ is a (complex) analytic function which is real-valued on real $\lambda$. 
With $f$ being a power of $\lambda$ and certain exponential function,
$S$ is related to  {\it diffusion distance}  \cite{coifman2006diffusion} 
and heat kernel signature \cite{sun2009concise}  respectively,
to be discussed more in the last section. 

We have been addressing the special case where $f=1$.
To extend the analysis to any analytic $f$,
consider the contour integral of $f(z)R(z)$, $R$ being the resolvent (defined in \eqref{eq:formula-R-2}), 
and then the time-evolution equation of $S(x,t)$ can be shown to be
\begin{align*}
 \dot{S}(x) 
& =  
2 \sum_{\substack{k \in I\\ j \notin I}} \frac{ f(\lambda_k) }{\lambda_k - \lambda_j}( \psi_k^T ( \dot{W} -\lambda_k \dot{D}  ) \psi_j ) \psi_k(x) \psi_j(x) \\
&+ \sum_{\substack{k, j \in I\\ k\neq j} }
		\left( 
		\frac{ f(\lambda_k) - f(\lambda_j)}{\lambda_k - \lambda_j} \psi_k^T \dot{W} \psi_j  -  \frac{ \lambda_k f(\lambda_k) - \lambda_j f(\lambda_j)}{\lambda_k - \lambda_j} \psi_k^T \dot{D} \psi_j
		 \right)
		\psi_k(x) \psi_j(x) \\
& + \sum_{k \in I} ( f'(\lambda_k) \psi_k^T \dot{W} \psi_k - (zf(z))' (\lambda_k) \psi_k^T \dot{D} \psi_k ) \psi_k(x)^2,
\end{align*}
where in case that the eigenvalues $\lambda_k$ and $\lambda_j$ coincide, the term $\frac{ f(\lambda_k) - f(\lambda_j)}{\lambda_k - \lambda_j}$
is replaced by $f'(\lambda_k)$ 
and $\frac{ \lambda_k f(\lambda_k) - \lambda_j f(\lambda_j)}{\lambda_k - \lambda_j} $ by
$ (zf(z))' (\lambda_k)$. 
So the r.h.s. is well-defined when an eigen-crossing within $I$ happens,
and the terms $\frac{ f(\lambda_k) - f(\lambda_j)}{\lambda_k - \lambda_j}$ and $\frac{ \lambda_k f(\lambda_k) - \lambda_j f(\lambda_j)}{\lambda_k - \lambda_j} $
are uniformly bounded due to the analyticity of $f$. 
When $f=1$, the equation reduces to \eqref{eq:S-evolve}. 
Proceeding with the same technique as in the proof of the main result, 
the deformation bound of $S(x,t)$ will then involve constant factors which depend on the boundedness of $f$ and $f'$ on $[0,1]$.
Specifically, the constant $\tilde{C}$ will need to be redefined to be 
$(c_1 + \frac{c_2}{\Delta})\frac{2C}{\underline{d_0}}$ where $c_1$ and $c_2$ are absolute constants. 
E.g., when $f(\lambda) = \lambda^p$, $p > 0$, $c_2$ remains 4 (which is the dominating term with small $\Delta$) and $c_1 = (p+1)$.

2. {\it Unequal cluster size in $\cC$}.
The requirement of equal cluster size of the $K$ clusters in $\cC$ can be relaxed. 
Specifically, suppose that the $K$ clusters have varying sizes $|C_j| = \delta_j n$, and $\sum_{k=1}^K \delta_j = \delta$. 
Let $\delta_{\text{min}} = \underset{1 \le j \le K} {\min} \delta_j$, and similarly define $\delta_{\text{max}}$.
Then under Assumption \ref{assump:A1}, 
\eqref{eq:init-sepa-SC} and \eqref{eq:init-sepa-SB} become 
\begin{align}
\frac{ 1-\varepsilon_1 }{n \overline{d_0} \delta_j } 
& \le
S(x,0)
 \le
\frac{1}{n \underline{d_0}} 
(
\frac{\varepsilon_1}{\delta/K} + \frac{1 + \varepsilon_1}{ \delta_j }
),
\quad
\forall x\in C_j \text{ for some $j$}.
\label{eq:init-sepa-SC-2}
\\
 S(x,0)
& \le
\frac{1}{n \underline{d_0}} \frac{ (1+\varepsilon_2) ( |I| - K) }{1-\delta},
\quad
\forall x \in \cB,
\label{eq:init-sepa-SB-2} 
\end{align}
Define  for $j=1,\cdots, K$,
\begin{equation}\label{eq:def-gj}
g_{j,0} :=
 \frac{1}{n \underline{d_0}} 
\left(  \frac{\underline{d_0} (1-\varepsilon_1) }{ \overline{d_0} \delta_j} 
 - \frac{ (1+\varepsilon_2) ( |I| - K) }{1-\delta} \right),
\end{equation}
and the minimum of $g_{j,0}$ is 
\[
g_{\text{min},0} =
\frac{1}{n \underline{d_0}} 
\left(  \frac{\underline{d_0} (1-\varepsilon_1) }{ \overline{d_0} \delta_{\text{max}}} 
 - \frac{ (1+\varepsilon_2) ( |I| - K) }{1-\delta} \right).
\]
Modify Assumption \ref{assump:A2} to be that $g_{\text{min},0} > 0$,
then the initial separation of $S(x,0)$ on $\cC$ and $\cB$ is at least $g_{\text{min},0}$
(and more precisely $g_{j,0}$ between $C_j$ and $\cB$),
and 
\eqref{eq:Sx0-upper-bound} becomes 
\begin{equation}\label{eq:upper-bound-S0-3}
\bar{S}(0) = 
\sup_{x \in \cV} S(x)
\le
\frac{1}{n \underline{d_0}} ( \frac{\varepsilon_1 }{\delta / K} + \frac{1+\varepsilon_1}{\delta_{\text{min}}} ).
\end{equation}
Note that Proposition \ref{prop:I-gap-preserve}, Proposition \ref{prop:S-evolve} and claims (1) (2) in the proof of Theorem \ref{thm:S-sepa} 
do not rely on Assumption \ref{assump:A1} or Assumption \ref{assump:A2} and are valid. 
As a result, it can be shown that the $t=1$ separation between $\cC$ and $\cB$ by $S$ holds as long as 
\begin{equation}\label{eq:g-condition-3}
g_{\text{min},0} \ge 
2 (e^{\tilde{C}} -1) \frac{1}{n \underline{d_0}} ( \frac{\varepsilon_1 }{\delta / K} + \frac{1+\varepsilon_1}{\delta_{\text{min}}} ),
\quad 
\tilde{C} = \left( 1+\frac{4}{\Delta} \right) \frac{2 C }{ \underline{d_0}}.
\end{equation}
This condition is more restrictive when the cluster sizes in $\cC$ are less balanced, 
namely when the difference 
$\delta_{\text{max}} - \delta_{\text{min}}$ becomes larger.
In our numerical experiments, all the sub-clusters are of comparable sizes 
(in the outlier detection in images, $K=1$ or 2, and in image segmentation the clusters are of similar sizes),
while we note that an 
extremely unbalanced cluster size, e.g., very small $\delta_{\text{min}}$, could affect the performance of the method.

3. {\it Detection of parts of $\cC$}.
The above argument leads to a ``personalized'' detection condition for each cluster $C_j$ in $\cC$,
that is, even when $S(x,1)$ fails to separate some clusters in $\cC$ from $\cB$ it may still successfully detect the rest. 
To see this, note that the proof of the theorem actually gives the following: 
For any subsets $E_1$ and $E_2$ of $\cV$, if
\[
\min_{x \in E_1} S(x,0) - \max_{x \in E_2} S(x,0) \ge g_{E_1, E_2} > 0
\]
then $E_1$ and $E_2$ can be separated by $S(x,1)$ by some threshold as long as \[
g_{E_1, E_2} \ge 2 (e^{\tilde{C}} -1) \bar{S}(0).
\]
The previous results corresponds to $E_1 = \cC$ and $E_2 = \cB$.
Let $E_1$ be any individual cluster $C_j$,
then since $\bar{S}(0)$ is upper-bounded by \eqref{eq:upper-bound-S0-3}, 
we have that each cluster $C_j$ can be separated from $\cB$ by $S(x,1)$ if $g_{j,0}$ as defined in \eqref{eq:def-gj} 
is larger than the r.h.s. of \eqref{eq:g-condition-3}. 

4. {\it Initial inclusion of $I$}. 
The Assumption \ref{assump:A1} (a) can be relaxed by only requiring $K'$ $\cC$-eigenvectors in $I$, $0< K' \le K$,
as long as they contribute to a sufficiently large $S(x,0)$ on $\cC$, 
or any subset of $\cC$ such as an individual cluster $C_j$.
The separation guarantee at time 1 follows the same argument as in item 2. above,
where the quantities \eqref{eq:init-sepa-SC-2} \eqref{eq:init-sepa-SB-2} and consequently \eqref{eq:def-gj} \eqref{eq:upper-bound-S0-3}
need to be modified. The precise condition is not pursued here. 
In practice, this means that even if less than $K$ ``nearly'' $\cC$-eigenvectors are included in $I$, the method may still be able to detect part of $\cC$ from $\cB$.

%
%%%
% sec 4
%
\section{Experiments}\label{sec:4}
In this section we will apply the spectral embedding norm to both synthetic and real-world datasets, in scenarios of both single outliers and multiple clusters in a cluttered background. 
Codes are available at \url{https://github.com/xycheng/EmbeddingNorm}.

\begin{figure}[t]
\centering{
 \subfloat[]{ \includegraphics[height=0.2\linewidth]{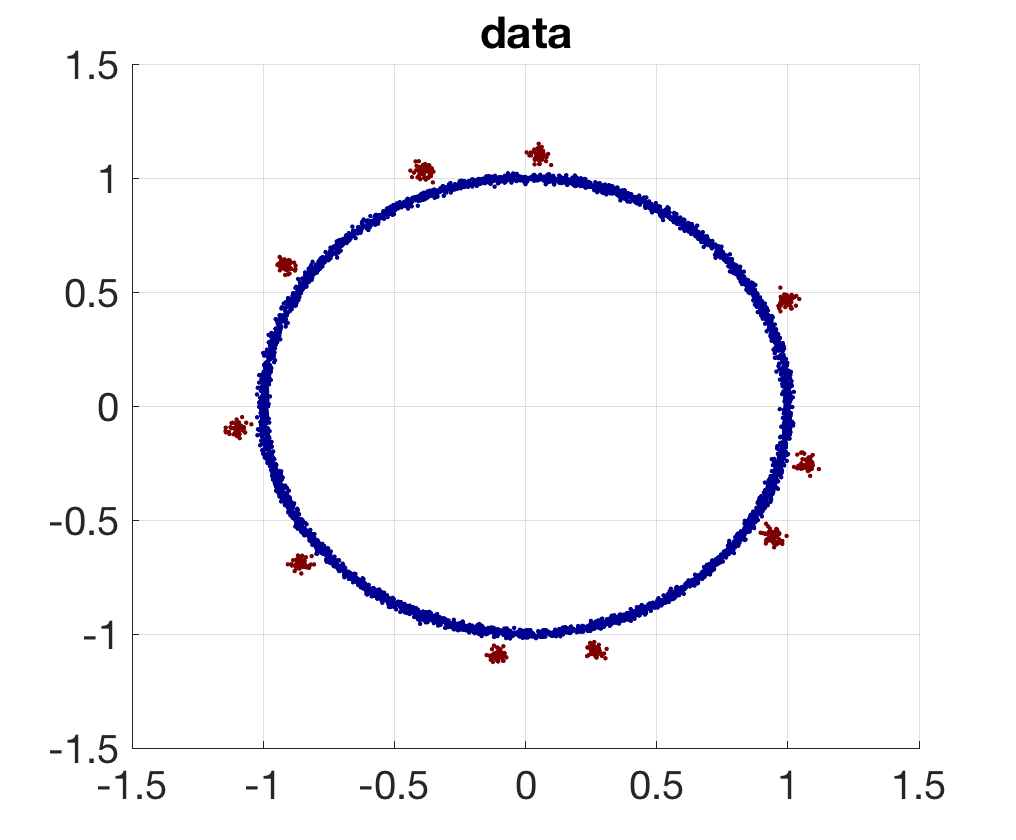} \hspace{-10pt} } 
 \subfloat[]{ \includegraphics[height=0.2\linewidth]{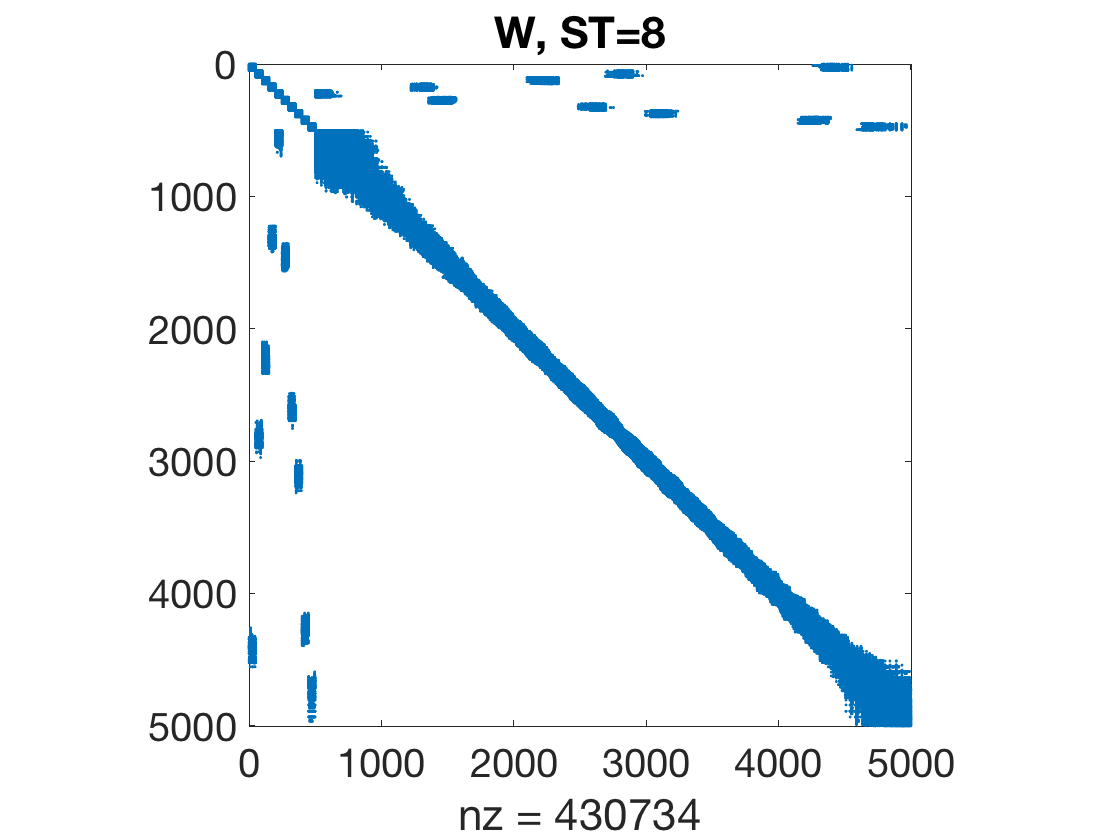} \hspace{-10pt}}
 \subfloat[]{ \includegraphics[height=0.2\linewidth]{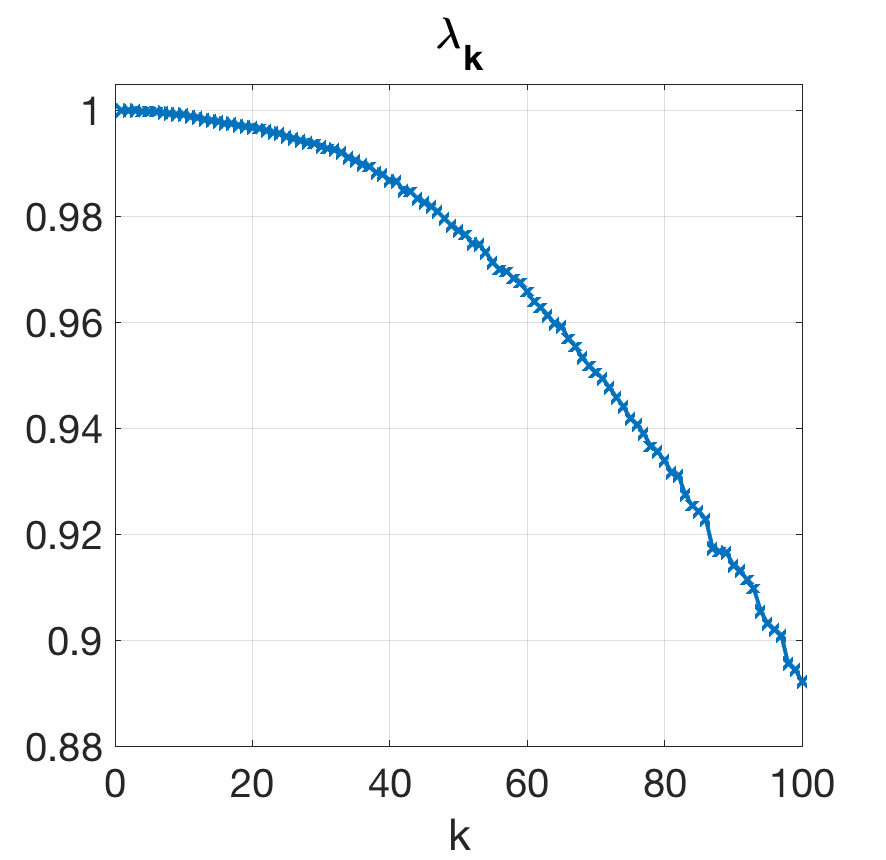}  \hspace{-5pt}}
 \subfloat[]{ \includegraphics[height=0.2\linewidth]{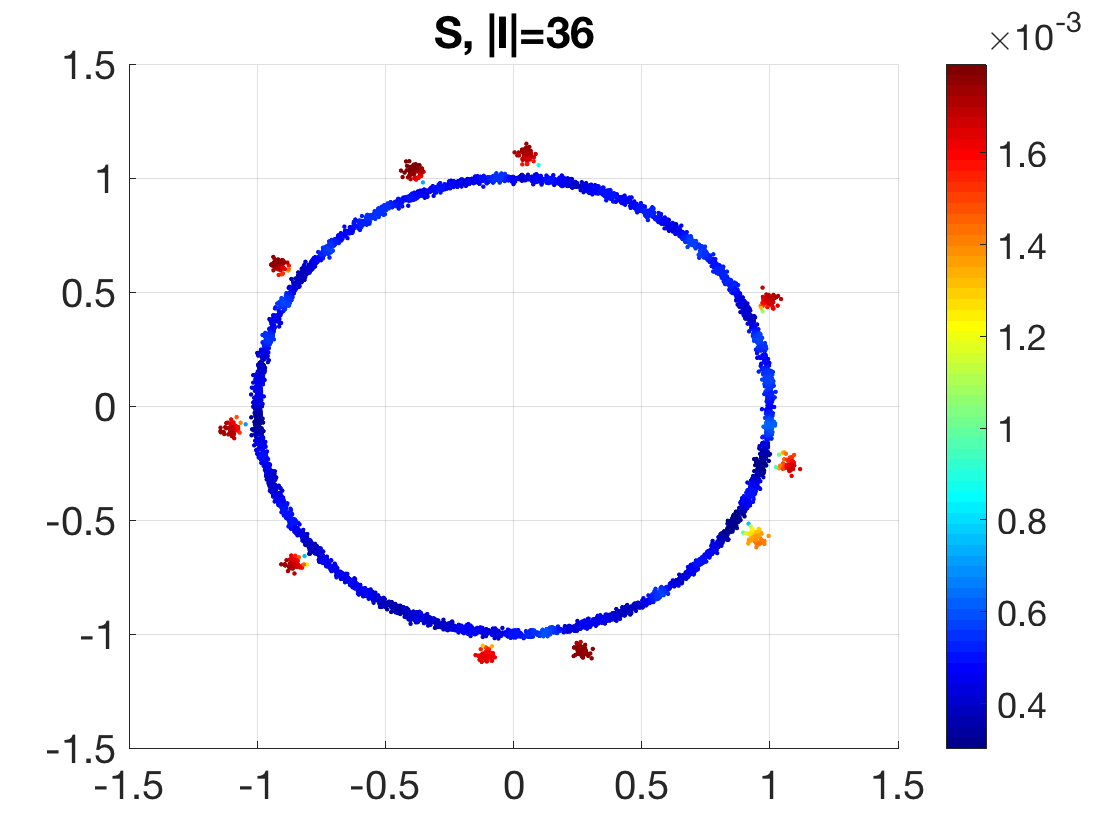} }\\
 \subfloat[]{\hspace{-15pt} 
 	\includegraphics[height=0.295 \linewidth]{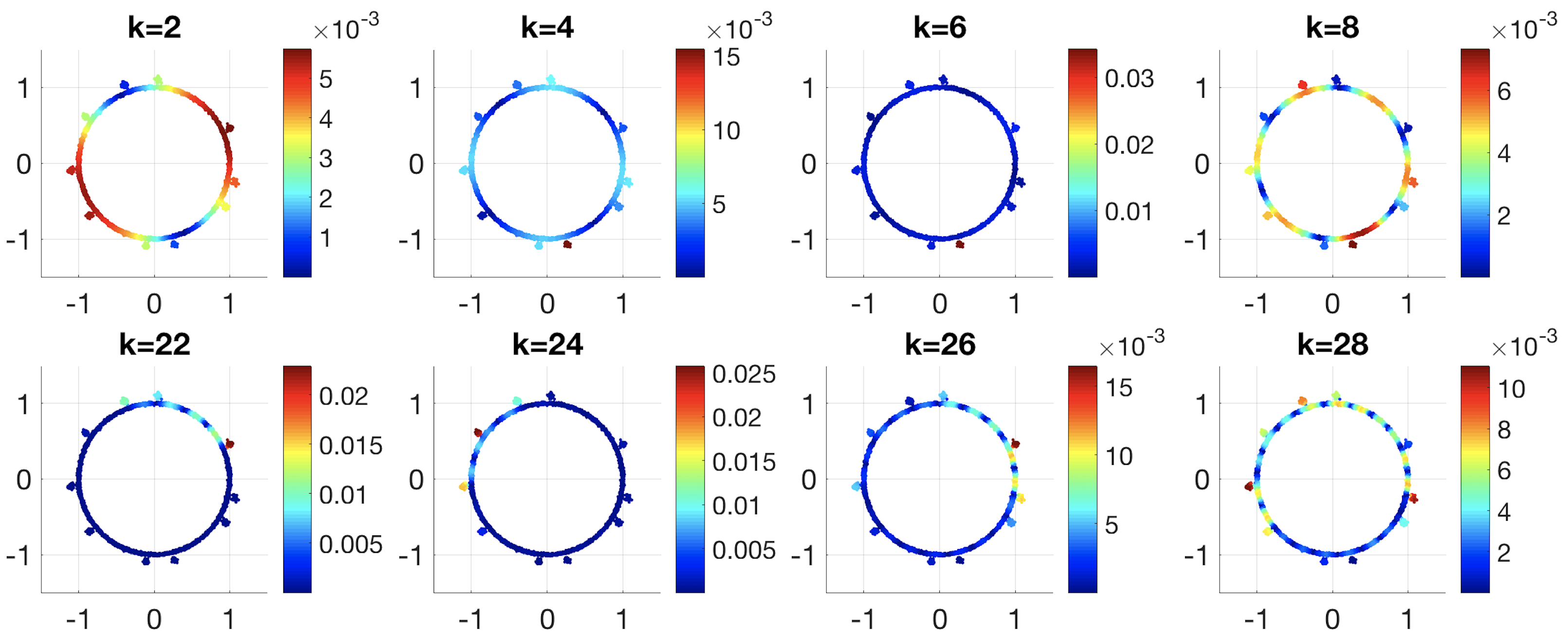} 
 \hspace{-8pt}}
 \subfloat[]{ \includegraphics[height=0.298 \linewidth]{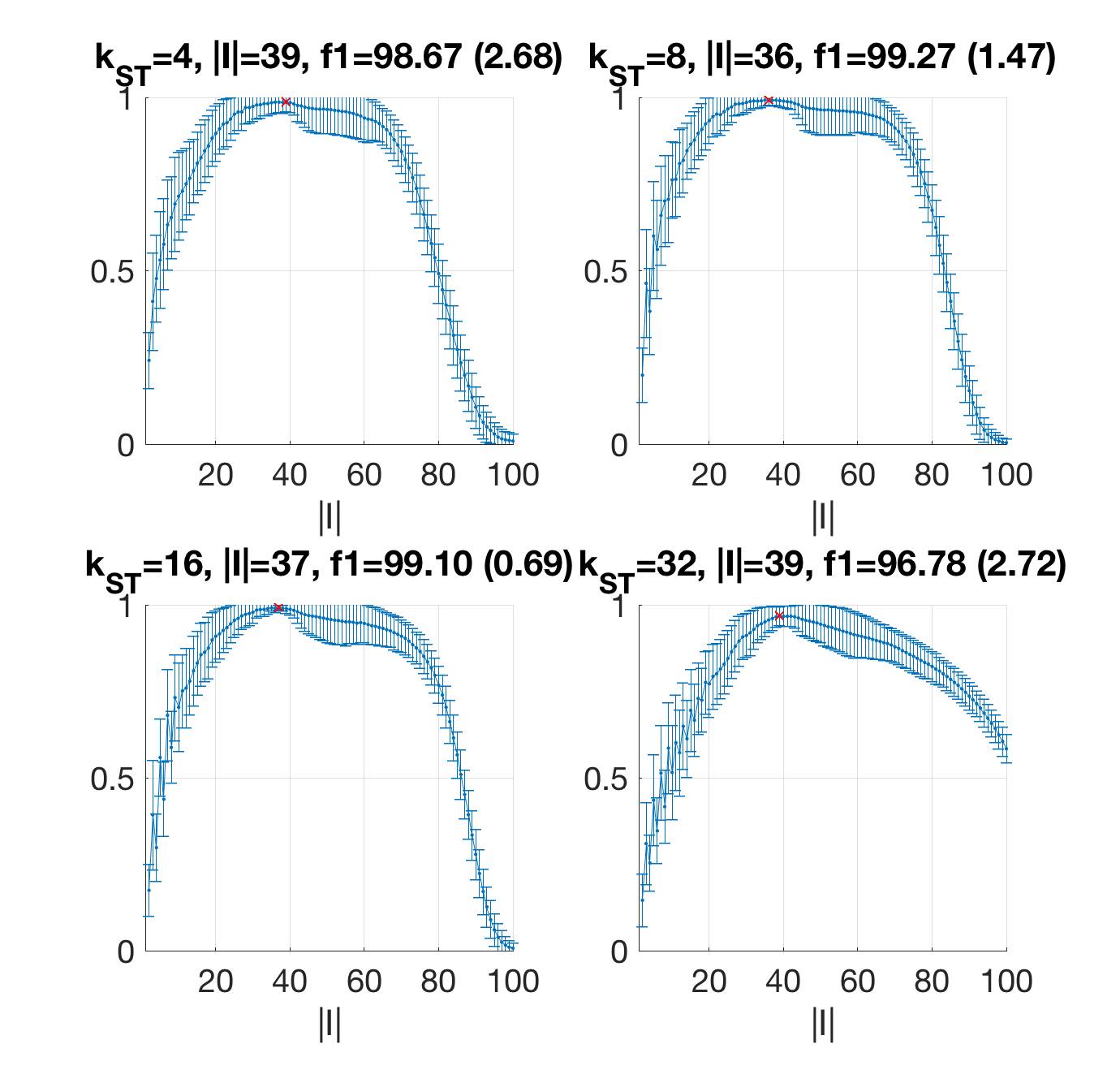}}
}
\vspace{-10pt}
\caption{ 
Detection of $\cC$ from a manifold-like $\cB$:
(A) $n=5000$ data points in $\R^2$ sampled on ${\cB} \cup {\cC}$, 
where ${\cB}$ points lie close to a circle (blue) and
 ${\cC}$ points form $K=10$ clusters lying nearby (red), $\delta = 0.1$.
(B) The affinity matrix $W(t)$ at $t=1$, c.f. \eqref{eq:deformW}.
(C) The first 100 eigenvalues of the Markov matrix.
(D) The plot of $S$.
(E) $k$-th Eigenvectors of multiple $k$'s of the Markov matrix. 
(F) F1-score of the detection of $\cC$ by thresholding the values of $S_I$, where $|I|$ varies from 2 to 100, 
and for multiple choice of self-tuning parameter ($k$-nearest neighbor in self-tuning, denoted by $k_{\text{ST}}$). 
Mean and standard deviation of F1-score are shown, and optimal value of $|I|$ are indicated by a red cross. 
}
\label{fig:toy2-exp}
\end{figure}

\begin{figure}[t]
\centering{
\subfloat[]{\hspace{-15pt}   \includegraphics[trim=20 20 20 5, clip,height=0.19\linewidth]{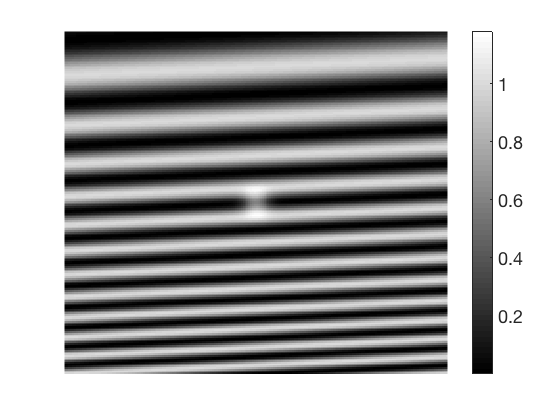} \hspace{-5pt}}
\subfloat[] {\hspace{-5pt}  \includegraphics[trim=20 20 20 5, clip,height=0.19\linewidth]{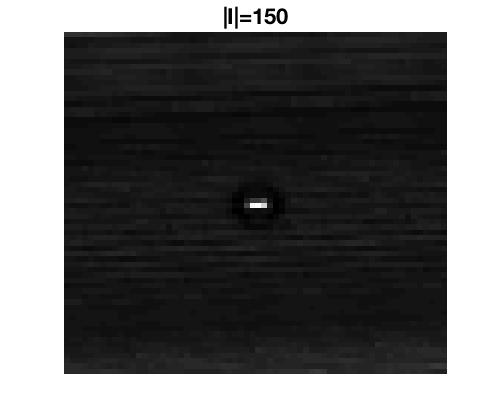}\hspace{-5pt}}
\subfloat[] {\hspace{-5pt} \includegraphics[trim=20 20 20 5, clip,height=0.19\linewidth]{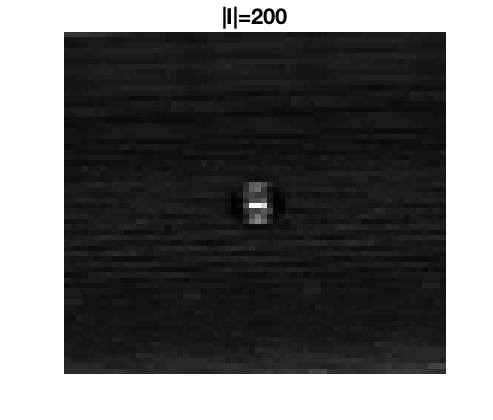}\hspace{-5pt}}
\subfloat[] {\hspace{-5pt} \includegraphics[trim=20 20 20 5, clip,height=0.19\linewidth]{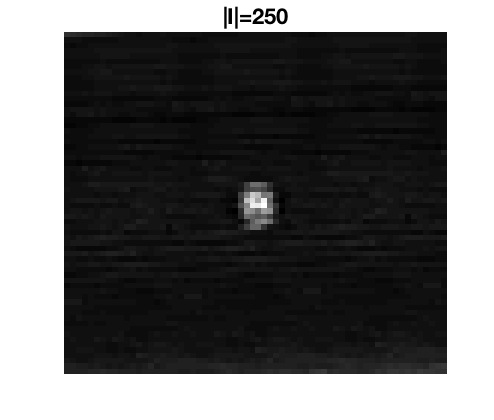}\hspace{-5pt}}
\subfloat[] {\hspace{-5pt} \includegraphics[trim=20 20 20 5, clip,height=0.19\linewidth]{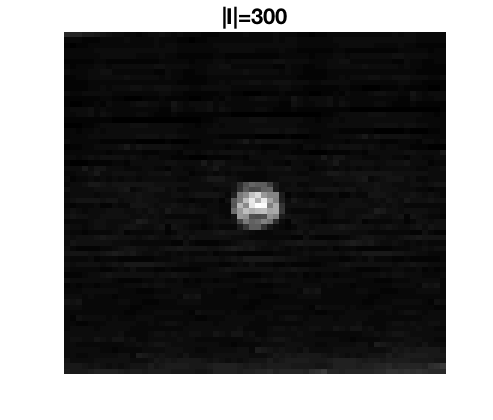}}\\
\subfloat[]{ \hspace{-15pt} \includegraphics[trim=20 20 20 5,height=0.27\linewidth]{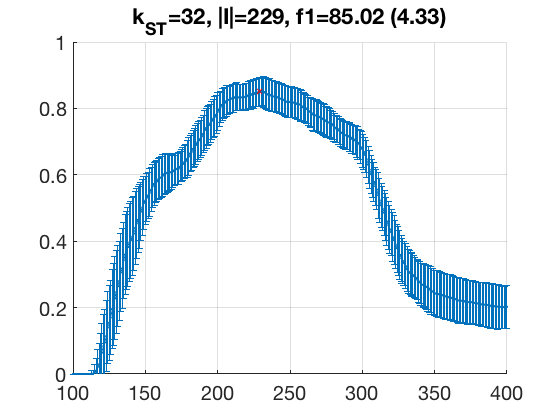} \hspace{-5pt}}
\subfloat[]{\includegraphics[trim=120 30 100 20, clip, height=0.28\linewidth]{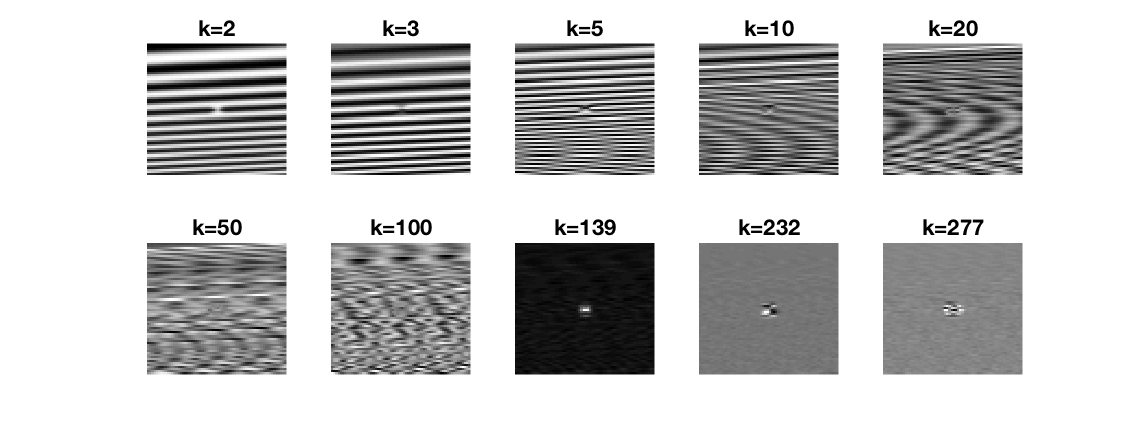}}
}
\vspace{-10pt}
\caption{ 
Detection of outlier patches in a synthetic image. 
(A) Image consisting of background stripes and an anomaly region in the center, $\delta \approx 0.01$.  
(B-E)  Embedding norm $S_I $ computed with $|I|=150$, 200, 250, 300 respectively, 
plotted as images where the pixel value indicates the value of $S_I$ and the pixel location is the center of every image patch. 
(F) F1-score of the detection of outlier patches 
by thresholding the values of $S_I$, 
where $|I|$ varies from 100 to 400.
Randomness is attained by subsampling image patches and running independent replicas of the experiment.
Mean and standard deviation of F1-score are shown, and optimal value of $|I|$ are indicated by a red cross. 
(G) Eigenvectors for various index $k$, plotted as images.
The last three show high-index exemplar eigenvectors which are localized on the outlier patches,
while the first 100 eigenvectors are mainly supported on the background. 
}
\label{fig:patchoutlier-exp}
\end{figure}

\subsection{Manifold data toy example}\label{subsec:4.1}
We begin with a simulated dataset in $\R^2$ composed of a manifold-like background $\cB$ and clusters in $\cC$ according to the following model. The background $\cB$ consists of i.i.d samples $x_i $ distributed as $x_i = y_i +n_i$,
where $y_i$ are uniformly distributed on the unit circle, which is a one-dimensional manifold,
and $n_i \sim \mathcal{N} (0, \epsilon_{\cB}^2 I)$, with $\epsilon_{\cB} = 0.01$.
$\cC$ contain $K$ equal-sized sub-clusters, each has i.i.d. samples drawn from $\mathcal{N} (\mu_j, \epsilon_{\cC}^2 I)$,
where $\mu_j$ are centered close to the circle, and $\epsilon_{\cC} = 0.02$. 
We generate $n=5000$ points,
and the number of points in $\cC$ is set to be $\delta n$ for positive $\delta$, rounded to the closest integer.
To measure the accuracy of the detection of $\cC$ we compute the F1 score:
$\textrm{F1} = \frac{2pr}{p+r}$, 
where $p:=\frac{ \text{TP}}{ \text{TP} + \text{FP}}$, $r := \frac{ \text{TP}}{\text{TP} + \text{FN}}$,
and TP, FP and FN stand for True Positive, False Positive and False Negative respectively. 
The $\delta$-quantile of the empirical values of $S$ is used as the threshold $\tau$ when computing the classification.

Figure \ref{fig:toy2-exp} shows results for a typical realization of the dataset with $K=10$, $\delta = 0.1$.
From (C) it can be seen that  the eigenvalues do not reveal any clear eigen-gap at $K=10$.
The first $K$ eigenvectors do not give a clear indication of the cluster $\cC$, but are mainly supported on the $\cB$-eigenvectors,
as shown in (E) for $k=2$ and 8.
Examining up to the first $k \sim 40$ ones, 
certain eigenvectors are more localized on $\cC$ when $k > K$, 
e.g., $k=24$ and 26.
The embedding norm $S_I$ clearly separates $\cC$ from $\cB$, as shown in (D).
The results are not sensitive to algorithmic parameter choices. 
Let $k_{\text{ST}}$ be the $k$-nearest neighbor used to set the local self-tuning scale~\cite{selftune} in constructing the affinity matrix $W$. 
Then, throughout varying values of the parameter $k_{\text{ST}}$,
the F1-score of the detection by thresholding $S$ reveals a ``plateau'' of valid values of $|I|$, 
e.g., when $k_{\text{ST}} = 8$, the range of $|I|$ is about $30 \sim 45$,
with the optimal F1 score obtained at $|I|=$36. 
The best F1 score for $k_{\text{ST}} = 4$, 8 or 16 are all greater than 0.98. 

Similar results are obtained for smaller $K=2$ where $\delta = 0.02$,
as shown in Figure \ref{fig:toy1-exp}. 
The condition  \eqref{eq:assumpA2} 
in Assumption \ref{assump:A2}
suggests that $|I|$ is chosen to be proportional to $\frac{K}{\delta}$,
and this is revealed in Figure \ref{fig:toy2-exp} and Figure \ref{fig:toy1-exp} (in these two examples $\frac{K}{\delta}$ is kept to be the same)
as the plateaus of valid $|I|$ are at about the same range, across values of $k_{\text{ST}}$.

\begin{figure}[t]
\centering{\includegraphics[width=1.0\linewidth]{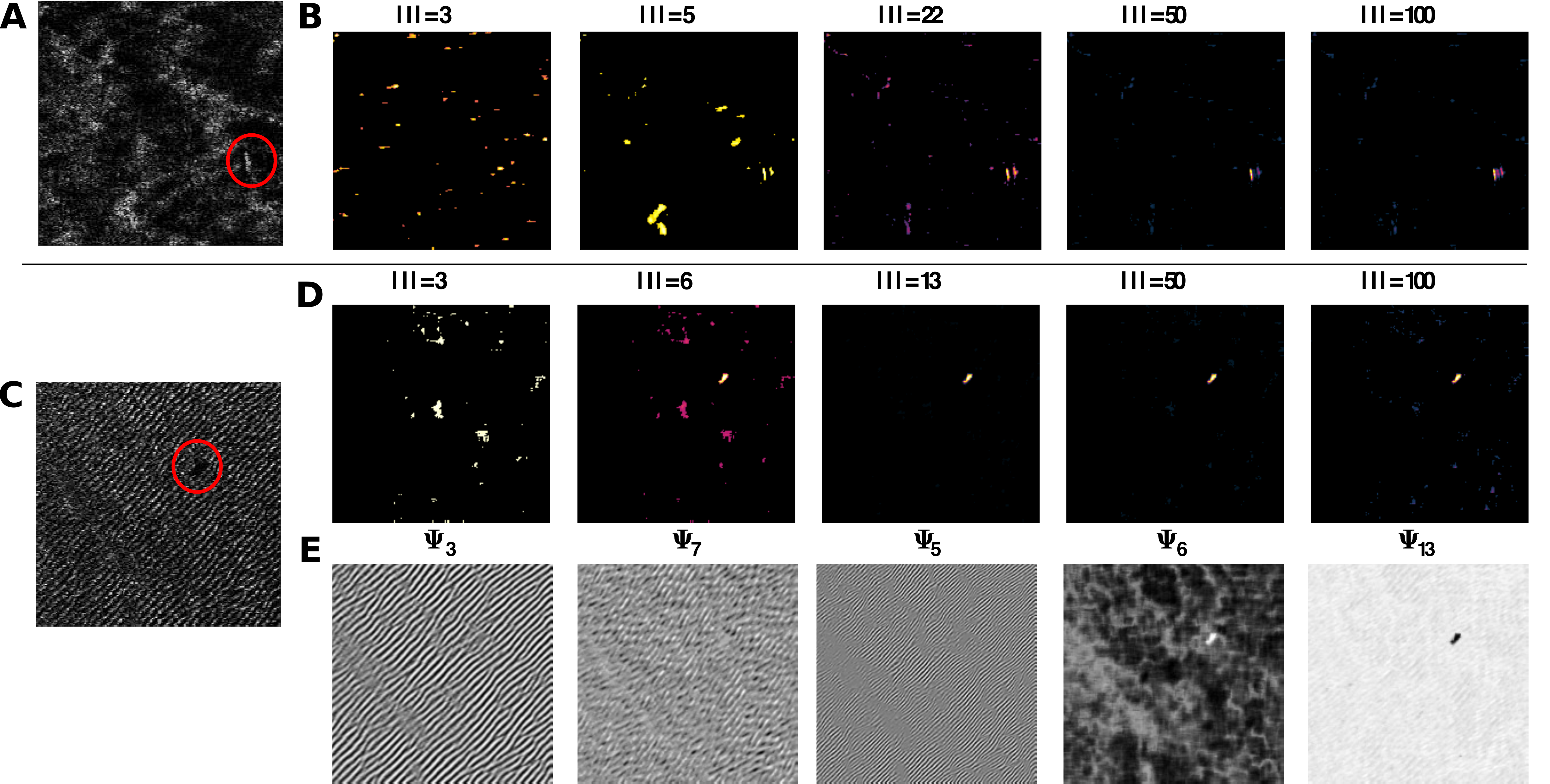}}
\caption{Detecting anomalies in side-scan sonar images. 
A. left: Side-scan sonar images with sea-mines indicated by red circle. right: Spectral embedding norm with $|I|=50$ reveals the sea-mine.
B. Examples of Laplacian eigenvectors with eigenvectors mainly supported on the background (top) or localizing on the sea-mine (bottom). 
C. Spectral embedding norm $S_I$ for top side-scan sonar image with increasing $|I|$. The sea-mine first ``appears'' for $|I|=6$. For $|I|=13$ the sea-mine separates cleanly from the background, however as $|I|$ increases to 100 spectral norm starts to reveal background components as well. 
D. The target sea-mine first ``appears'' for $|I|=5$, however it then recedes into the background (not displayed). For $|I|=100$, the sea-mine is well separated from the background.
E. Exemplar eigenvectors plotted as an image demonstrating localization on the background in the three left plots and localization on the sea-mine in the two right plots.}
\label{fig:sea-mines}
\end{figure}

\subsection{Image anomaly detection}\label{sec:4-2}
Anomaly detection can be seen as a special case of
clustering in which there is a vast imbalance in the size of clusters,
i.e., background vs. anomaly, and the density of each cluster. 
In image anomaly detection, the goal is to detection a small compact region (subset of connected pixels) that differs from the normal image background.
In general, we can assume the number of anomalies $K$ to be small or even 1.

Figure \ref{fig:patchoutlier-exp} shows the numerical result on a synthetic image which consists of anomaly patches against a slowly varying background,
as a model of patterned images.
The image is a function on $[-1,1]^2$ expressed as
\[
I(x,y) = \left( 1+\frac{1}{2} \cos( (0.05 x + y + 1.5)^2 \cdot 2\pi)\right) + 0.6 \exp\{ - \frac{x^2+y^2}{2\cdot 0.05^2}\},
\]
where the first term models the background stripes, 
and the second term models the outlier region in the center, as shown in (A). 
The image size is $200\times 200$, and $n=4096$ image patches of size $9 \times 9$ are extracted with a stride of 3. 
True outlier patches are identified by thresholding the value of the Gaussian bump with a fraction of $\delta \approx 0.01$. 
The graph affinity is computed with self-tuning bandwidth~\cite{selftune} and $k_{\text{ST}} = 32$.
The spectral embedding norm computed for various values of $|I|$ is shown in (B-E), all of which identify the outlier region with improved performance for $|I|=200$ and 250. 
The leading eigenvectors up to $k=100$ fail to be indicative of the outlier region, 
while those with higher $k$ may be, as shown in (G).
To quantitatively evaluate the performance of the proposed method, 
we threshold $S_I$ at the $0.99$ quantile to detect the anomaly patches,
and compute the F1 score. We repeat $100$ experiments by randomly subsampling 3000 patches and the averaged F1 score is shown in (F) with standard deviation. The method achieves an average best $F1 = 85.02$ when $|I|=229$. The results are similar with $k_{\text{ST}} = 16$ and 64.

In Figure~\ref{fig:sea-mines} we demonstrate on real-world images that eigenvectors localizing on the anomaly can be buried deep within the spectrum of the image, and that by calculating the spectral embedding norm we can separate the anomalies from a cluttered background.
We examine two side-scan sonar images containing a single sea-mine, displayed in (A) and (C), where we consider the sea-mine to be an anomaly (indicated by a red circle).
The sea-mine can appear as either either a bright highlight (A) or a only a dark shadow (C), which is due to the object blocking the sonar waves from reaching the seabed. The background is composed of sea-bed reverberations and exhibits great variability in appearance. 
For the side-scan sonar images, in practice $\delta$ is in the range of $5 \times 10^{-3} \sim 5 \times 10^{-4}$.
To construct the affinity matrix $W$, we extract all overlapping patches of size 8, and build a nearest neighbor graph with 64 neighbors, setting $k_{\text{ST}}=32$.  
(B) and (D) display $S(x)$ for all pixels $x$ in the image, for increasing values of $|I|$. For both images the sea-mine is revealed consistently for a wide range of values, while the background is suppressed. Note that in both cases this requires looking deep enough in the spectrum, and summing over the first few eigenvectors brings out background structures.
Finally, (E) displays eigenvectors $\psi_k$ for the side-scan sonar image in (C), where the three left eigenvectors localize on the background $\cB$, revealing its periodic nature at different scales and orientations, while the eigenvectors in the right two plots localize on the sea-mine $\cC$. 

\begin{figure}[t!]
%\vskip -0.1 in
\centering{\includegraphics[width=1.0\linewidth]{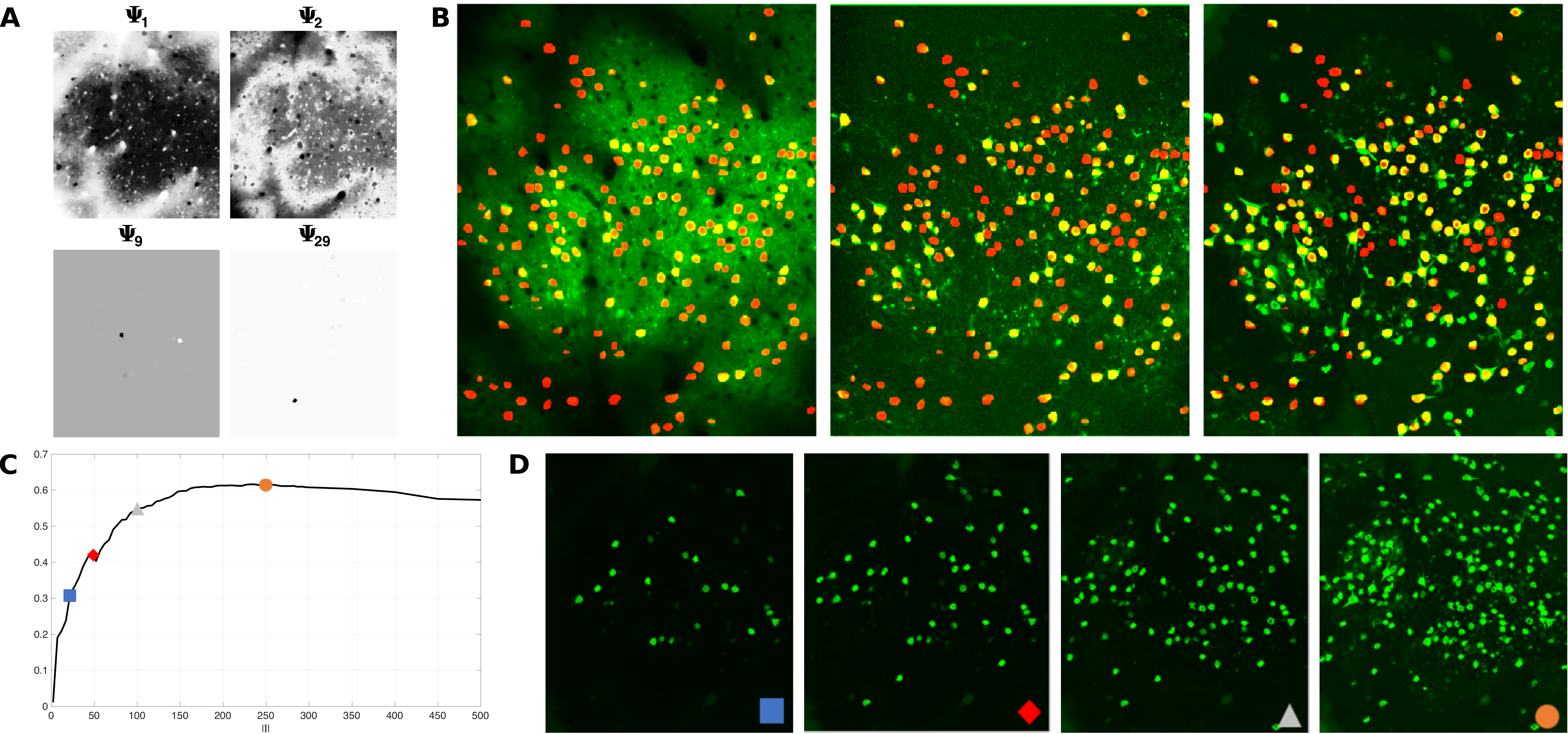}}
\vskip -0.1 in
\caption{
A. Example of Laplacian eigenvectors on a calcium imaging dataset from Neurofinder, 
either mainly supported on the background clutter (top) or localizing on neurons which are clusters (bottom).  B. Images of the Temporal mean (left) Temporal correlation(middle) and Spectral embedding norm (right) for a Neurofinder dataset. The spectral embedding norm has both removed the background (which is present in the mean image) and enhanced the appearance of the structure in the image: neuronal soma and dendrites, with sharp morphology. The correlation image is much noisier with fewer visual neurons. C. F1-score of segmenting neurons from background based on the spectral embedding norm image, for increasing $|I|$ values. For a range of values (200-250) the F1 score plateaus, and then decreases as the number of included eigenvectors increases. D. $S(x)$ for $|I|=20,49,100,250$  demonstrates how more and more clusters are revealed. }
\label{fig:TP_eigvec}
\end{figure}

\subsection{Calcium imaging}\label{sec:4-3}
Calcium imaging is an experimental method in neuroscience that enables imaging the individual activity of hundreds of neurons in an awake behaving animal, at cellular resolution~\cite{RN27}. 
The acquired data is composed of a spatiotemporal volume, where, after motion correction, the neuron locations are fixed and the temporal activity consists of hundreds to tens of thousands of time-frames. 
There is also varying temporal activity in the background (neuropil).
Thus, this data can also be viewed as an image whose pixels lie in a high-dimensional space (time-frames), consisting of hundreds of clusters (neurons) in an image plane with a non-trivial background, which matches our problem setting.

The analysis pipeline of calcium imaging typically includes calculating a 2D image that depicts the structure that exists in this volume and highlights the existing neurons.
Such images serve for manual segmentation, to align imaged volumes across days (where the field of view may shift), to display neurons detected by automatic and manual means, and even for initialization of automatic ROI extraction algorithms methods~\cite{Reynolds2017,Pachitariu2013}.
A common choice is the temporal correlation image~\cite{Smith2010}, or the temporal mean image. 
Here we show that the spectral embedding norm provides a meaningful visualization of the data, with sharp morphology and suppression of noise from the background clutter.  
For these datasets, depending on the brain region and neuron type being imaged, $K$ is in the range of dozens to few hundreds, and $\delta \lessapprox 0.1$.  

In Figure~\ref{fig:TP_eigvec}, we analyze a publicly available dataset from Neurofinder~\cite{berens2017standardizing}. The images are $512\times 512$ pixels and 8000 time frames have been recorded at 8 Hz. Ground truth labels provided with the dataset includes 197 identified neurons, however note that recent papers point out that the ground truth on Neurofinder datasets is probably lacking, i.e. not all neurons are labeled~\cite{Reynolds2017,Spaen2017}.
The affinity matrix $W$ is calculated using a nearest neighbor graph for all pixels, represented as high-dimensional vectors in time, with 50 nearest neighbors. To accelerate the nearest neighbor search dimensionality is reduced from 8000 to 300 using PCA.
(A) displays examples of eigenvectors from both the background (top) and localizing on single neurons (bottom). 

In (B) we compare the spectral embedding norm (right) to the temporal mean (left) and temporal correlation image (middle). 
In each image, the values (mean/correlation/norm) appear in the green channel, while we overlay in the red channel a mask of the ground truth labels that were manually detected (where the two overlap it appears as yellow). 
The mean image exhibits a strong background, while neurons appear as typical ``donuts''~\cite{Pachitariu2013}. In the correlation image, the background mostly appears as noise. In comparison, the background has been suppressed in the spectral embedding norm image, while neurons which are barely or not at all visible in the correlation image appear as bright clusters. 

To quantify, the separation of background and clusters, we segment the spectral embedding norm image for increasing $|I|$, and compare the overlap between the segmented clusters and the given ground-truth mask.
We set the threshold $\tau$ to be the value of the 93-rd percentile of $S$ values for each value of $|I|$. 
In (C) we plot the F1 score for this segmentation and demonstrate a plateau of stable F1-score values for $|I|$ in the range 200--250.
To demonstrate the property of the spectral embedding norm to perform partial detection of $\cC$, we  display $S(x)$ for multiple values of $|I|$ in (D). 
Note that we are not performing clustering here, but rather demonstrating how the embedding norm can be used to separate meaningful structure from background clutter. 
Thus beyond visualization, this approach can then serve to remove the background, and focus only on the remaining clusters in $\cC$, thus simplifying subsequent clustering and data analysis tasks.

\section{Proofs}\label{sec:proof}
%\subsection{Proposition \ref{prop:init-sepa} (Initial separation by $S$)}
%
%
%
%
\begin{proof}[Proof of Proposition \ref{prop:init-sepa}]
It suffices to prove \eqref{eq:init-sepa-SC} and \eqref{eq:init-sepa-SB},
because Assumption \ref{assump:A2} implies that the r.h.s of 
\eqref{eq:init-sepa-SB} is strictly less than
the l.h.s. of \eqref{eq:init-sepa-SC}
by $g_0$, 
and then claims (1) and (2) directly follow.

To prove \eqref{eq:init-sepa-SC}:
Note that for any $x \in \cC$, at $t=0$,
\[
S(x) 
= \sum_{k \in I} \psi_k(x)^2
= \sum_{k \in I \cap \Psi^\cC} \psi_k(x)^2.
\]
By Assumption \ref{assump:A1} (a),
up to a possible $K$-by-$K$ rotation among the $K$ eigenvectors in $I \cap \Psi^\cC$, 
we assume that  $\psi_j$ is the eigenvector associated with the sub-cluster $C_j$,
$j=1,\cdots, K$, and then
\begin{equation}\label{eq:proof-SC-1}
S(x) 
= \sum_{j =1}^K \psi_j(x)^2,
\end{equation}
as the rotation preserves the squared sum. 
Furthermore,
suppose that $x \in C_{j_x}$, Assumption \ref{assump:A1} (b) gives  
\begin{align*}
\psi_j(x)^2 
& \in \frac{1}{\nu(C_j, 0)} [1-\varepsilon_1, 1+\varepsilon_1], \quad j=j_x, \\
\psi_j(x)^2
& \le \frac{\varepsilon_1}{ \nu(C,0)}, \quad j\neq j_x.
\end{align*}
Plugging into \eqref{eq:proof-SC-1}, it shows that
\[
S(x) \le \frac{1+\varepsilon_1}{\nu(C_{j_x}, 0)} + (K-1) \frac{\varepsilon_1}{ \nu(C,0)},
\]
and together with  
$\nu( C_{j}) = \sum_{x\in C_j} d(x,0) \ge \underline{d_0} |C_j| = \underline{d_0} \frac{\delta n}{K}$ for any $j$
(the $K$ sub-clusters are equal-size)
and similarly 
$\nu( C) \ge \underline{d_0} |C| = \underline{d_0} \delta n$,
it gives the upper bound in \eqref{eq:init-sepa-SC}.
Consider the lower bound, \eqref{eq:proof-SC-1} continues as
\[
S(x) 
\ge
\psi_{j_x}(x)^2
\ge 
\frac{1- \varepsilon_1}{\nu(C_{j_x}, 0)}.
\]
Combined with 
$\nu( C_{j}) = \sum_{x\in C_j} d(x,0) \le \overline{d_0} |C_j| = \overline{d_0} \frac{\delta n}{K}$
for any $j$,
this gives the lower bound in \eqref{eq:init-sepa-SC}.

To prove \eqref{eq:init-sepa-SB}:
For any $x \in \cB$, at $t=0$,  
\begin{equation}\label{eq:proof-init-SB-1}
S(x) 
= \sum_{k \in I} \psi_k(x)^2
= \sum_{k \in I \cap \Psi^\cB} \psi_k(x)^2
\le \sum_{k \in I \cap \Psi^\cB} 
	\frac{1+\varepsilon_2}{ \nu(B,0)},
\end{equation}
where
$
\nu(B,0) = \sum_{x \in \cB} d(x,0) 
\ge \underline{d_0} |\cB|
= \underline{d_0} n (1-\delta)$,
and the last inequality is by Assumption \ref{assump:A1} (c).
Meanwhile, $| I \cap \Psi^\cB| = |I| - K$ by Assumption \ref{assump:A1} (a). 
Plugging into 
\eqref{eq:proof-init-SB-1}, this proves \eqref{eq:init-sepa-SB}.
\end{proof}

%\subsection{Proposition \ref{prop:I-gap-preserve} (Preservation of the $I$-eigen-gap)}
%
%
%
%
\begin{proof}[Proof of Proposition \ref{prop:I-gap-preserve}]
We will establish that for any $k =1,\ldots,n$,
\begin{equation}\label{eq:stable-deform-lambda}
|\lambda_k(t) - \lambda_k(0)| 
\le 
\frac{4 C }{ \underline{d_0}  } t, 
\quad 0 \le t \le 1.
\end{equation}
Given that this inequality holds, then by \eqref{eq:A4-cond-C}, 
\begin{equation}\label{eq:lambdakt-lambdak0-bound}
 | \lambda_k(t) - \lambda_k(0) | 
 \le 
 \frac{4 C }{ \underline{d_0}  } 
 \le
 \frac{1}{2} \Delta, 
 \quad \forall 1 \le k \le n.
 \end{equation}
This means that it is impossible for $\Delta(t) < \Delta $:
Otherwise, there exist $| \lambda_{k_1}(t) - \lambda_{k_2}(t) | < \Delta$,
where $k_1 \in I$ and $k_2 \notin I$, 
and then \eqref{eq:lambdakt-lambdak0-bound}
implies that $ | \lambda_{k_1}(0) - \lambda_{k_2}(0) | < 2 \Delta $
which contradicts the assumption that $\Delta(0) \ge 2\Delta$. 

It suffices to show \eqref{eq:stable-deform-lambda} to finish the proof.
To do so, we prove the following bound 
\begin{equation}\label{eq:bound-lambda-prime}
| \dot{\lambda}_{k} | \le \frac{4 C }{ \underline{d_0} } ,
\quad \forall k,\, \forall t.
\end{equation}
From \eqref{eq:lambda-evolve}, 
\[
| \dot{\lambda}_{k} | 
 = |\psi_{k}^{T}( \dot{W} -\lambda_k \dot{D} )\psi_{k}| 
 \le |\psi_{k}^{T} \dot{W} \psi_{k} | + |\lambda_k| | \psi_{k}^{T} \dot{D} \psi_{k} |.
\]
As $|\lambda_k |\le 1$ (Perron-Frobenius),
then
\begin{equation}\label{eq:two-term-lemma-Igap}
 | \dot{\lambda}_{k} | \le |\psi_{k}^{T} \dot{W} \psi_{k} | + | \psi_{k}^{T} \dot{D} \psi_{k} |.
 \end{equation}
 If the following claim is true, then \eqref{eq:bound-lambda-prime} follows directly from \eqref{eq:two-term-lemma-Igap}:
\begin{equation}\label{eq:bounds-1}
|\psi_l^T \dot{W} \psi_k|,\,
|\psi_l^T \dot{D} \psi_k| \le \frac{2 C }{ \underline{d_0} },
\quad \forall k,l = 1,\cdots, n,
\, \forall 0 \le t \le 1.
\end{equation}

Proof of \eqref{eq:bounds-1}:
To bound $ |\psi_l^T \dot{W} \psi_k| $,
note that $\dot{W} = E$, 
and then
\begin{equation} \label{eq:bound1-lemma-Igap}
 |\psi_{l}^{T} E \psi_{k} | 
 \le \sum_{\substack{x\in \cC, y\in \cB}} |W(x,y)| |\psi_l(x)||\psi_k(y)|  
 + \sum_{\substack{x\in \cB,  y\in \cC}} |W(x,y)| |\psi_l(x)||\psi_k(y)|.
\end{equation}
We use the following more relaxed bound
$\forall x \in \cV=\cB \cup \cC$, $\forall k$, $\forall t$, 
\begin{equation}
\label{eq:bound-psix-relaxed}
|\psi_k(x)|^2 \le \sum_{j=1}^n \psi_j(x)^2 
= \frac{1}{d(x)}
\le \frac{1}{ \underline{d_0} }, 
\end{equation}
where the last inequality relies on $d(x) \ge \underline{d_0} $ throughout time (Lemma \ref{lemma:lower-bound-d}).
The second equality relies on $D\Psi\Psi^T = I$, thus $\Psi\Psi^T = D^{-1}$.
Then \eqref{eq:bound1-lemma-Igap} continues as 
\[
 |\psi_{k}^{T} E \psi_{k} | 
 \le 2 \sum_{\substack{x\in \cC \\ y\in \cB}} |W(x,y)| \frac{1}{ \underline{d_0} } 
 = \frac{2C}{ \underline{d_0}}.
\]
For $ |\psi_l^T \dot{D} \psi_k| $, similarly,
\begin{align*}
|\psi_{l}^{T} \dot{D} \psi_{k} | 
& \le \sum_{x\in \cV} |\psi_l(x)| |\psi_k(x)| |\dot{d}(x)| \\
& = \sum_{x\in \cV} |\psi_l(x)| |\psi_k(x)| |\sum_{y} E(x,y)| \\
& \le \frac{1}{ \underline{d_0}} 
 	\sum_{x\in \cB} \sum_{y\in \cC} W(x,y)
	+ \frac{1}{ \underline{d_0} } \sum_{x\in \cC} \sum_{y\in \cB} W(x,y)  \\
& = \frac{2 C}{ \underline{d_0} },
\end{align*}
where the bound \eqref{eq:bound-psix-relaxed} is used to bound each $|\psi_l(x)|$ and $|\psi_k(x)|$
in the 2nd inequality. 

Note that while time dependence has been omitted in all the notations,
the above arguments hold throughout time $t \in [0,1]$.
\end{proof}

%\subsection{Proposition \ref{prop:S-evolve} (Evolution of $S$)}
%
%
%
%
\begin{proof}[Proof of Proposition \ref{prop:S-evolve}]
As explained in the text, one may first establish the validity of \eqref{eq:psi-evolve}
and then verify the formula \eqref{eq:S-evolve} based on the former by  observing the cancelation of terms. 
As an alternative approach, we use the contour integral of the resolvent.

For $z \in C$ and not an eigenvalue of $P=D^{-1}W$, define 
\[
R(z) = (W - zD)^{-1}
\]
where the time dependence is omitted. 
By that $P = \Psi \Lambda \Phi^T$, $\Lambda = \text{diag}\{\lambda_1, \cdot, \lambda_n\}$, 
$\Phi = D \Psi$ and $\Psi^T\Phi = I$, 
one can verify the equivalent form of $R$ as 
\begin{equation}\label{eq:formula-R-2}
R(z) = \Psi (\Lambda - zI)^{-1}\Psi = \sum_{k=1}^n \frac{\psi_k \psi_k^T}{ \lambda_k - z}. 
\end{equation}
This means that 
\[
P_I 
=\sum_{k \in I} \psi_k \psi_k^T
= -\frac{1}{2\pi i}\oint_\Gamma R(z) dz,
\]
where the contour $\Gamma$ is such that
the eigenvalues in $I$ ($I^c$) stay inside (outside) $\Gamma$ throughout time $t$ (Figure \ref{fig:diag1}),
and such $\Gamma$ exists due to Proposition \ref{prop:I-gap-preserve}. 
Thus the above expression of $P_I$ holds for all time $t$,
and as a result
\[
\dot{P_I }
= - \frac{1}{2\pi i}\oint_\Gamma \dot{R}(z) dz.
\]
By differencing both sides of \[
(W-zD)R = I,
\]
one obtains that 
\[
\dot{R} = -R (\dot{W} - z\dot{D}) R.
\]
This means that 
\begin{align}
\dot{P_I } 
& = 
\frac{1}{2\pi i}\oint_\Gamma R(z) (\dot{W} - z\dot{D}) R(z) dz \nonumber \\
& =
 \sum_{k=1}^n \sum_{l=1}^n 
\frac{1}{2\pi i}\oint_\Gamma
\frac{\psi_k^T(\dot{W} - z\dot{D})\psi_l}{ (\lambda_k - z)(\lambda_l - z) }
 \psi_k \psi_l^T dz \nonumber \\
& = 
\sum_{k=1}^n \sum_{l=1}^n 
( \psi_k^T( \alpha_{kl} \dot{W} - \beta_{kl} \dot{D})\psi_l )
 \psi_k \psi_l^T 
 \label{eq:PI-formula-2}
\end{align}
where 
\[
\alpha_{kl} 
= \frac{1}{2\pi i}\oint_\Gamma \frac{1}{ (\lambda_k - z)(\lambda_l - z) } dz,
\quad
\beta_{kl} 
 = \frac{1}{2\pi i}\oint_\Gamma \frac{z}{ (\lambda_k - z)(\lambda_l - z) } dz.
\]
%\begin{align*}
%\alpha_{kl} 
%&= \frac{1}{2\pi i}\oint_\Gamma \frac{1}{ (\lambda_k - z)(\lambda_l - z) } dz,\\
%\beta_{kl} 
%& = \frac{1}{2\pi i}\oint_\Gamma \frac{z}{ (\lambda_k - z)(\lambda_l - z) } dz.
%\end{align*}
By Cauchy's integral formula, one can verify the following:

(a) When $k \in I$, $l \in I$, 
 $\alpha_{kl} = 0 $, $\beta_{kl} = 1$.

(b) When $k \in I$, $l \notin I$, 
 $\alpha_{kl} = \frac{1}{\lambda_k - \lambda_l}$, 
 $\beta_{kl} = \frac{\lambda_k}{\lambda_k - \lambda_l}$. 

(c) When $k \notin I$, $l \in I$, 
 $\alpha_{kl} = \frac{-1}{\lambda_k - \lambda_l}$, 
 $\beta_{kl} = \frac{-\lambda_l}{\lambda_k - \lambda_l}$.

(d) When $k \notin I$, $l \notin I$, 
$\alpha_{kl} = 0 $, $\beta_{kl} = 0$.

Then \eqref{eq:PI-formula-2} continues as 
\begin{align*}
\dot{P_I } 
& = 
\sum_{\substack{k\in I \\ l\in I}}
-( \psi_k^T \dot{D}\psi_l )
 \psi_k \psi_l^T 
+
\sum_{\substack{k\in I \\ l\notin I}}
\frac{ \psi_k^T( \dot{W} - \lambda_k \dot{D})\psi_l }{ \lambda_k - \lambda_l}
 \psi_k \psi_l^T  +
\sum_{\substack{k\notin I \\ l\in I}}
\frac{ \psi_k^T( - \dot{W} + \lambda_l \dot{D})\psi_l } { \lambda_k - \lambda_l}
 \psi_k \psi_l^T \\
& = 
\sum_{\substack{k\in I \\ l\in I}}
-( \psi_k^T \dot{D}\psi_l )
 \psi_k \psi_l^T   +
\sum_{\substack{k\in I \\ l\notin I}}
\frac{ \psi_k^T( \dot{W} - \lambda_k \dot{D})\psi_l }{ \lambda_k - \lambda_l}
( \psi_k \psi_l^T + \psi_l \psi_k^T ).
\end{align*}
Since $S(x) = P_I(x,x)$, the claim follows by evaluating at the entry $(x,x)$ on both sides.
\end{proof}

%\subsection{Theorem \ref{thm:S-sepa} (Separation at $t=1$)}
%
%
%
%
\begin{proof}[Proof of Theorem \ref{thm:S-sepa}]
We firstly show that 
condition (ii) implies \eqref{eq:A4-cond-C}:
Note that
\[
(\#)
	 \le \frac{\underline{d_0} (1-\varepsilon_1) }{ \overline{d_0}} 
	 \le \frac{\underline{d_0} }{ \overline{d_0}} 
	 \le 1,
\]
thus
 \[
 \log \left( 1 + \frac{1}{2}
\cdot \frac{ (\#) }{ 1+ 2 \varepsilon_1 } 
 \right)
 \le
 \log ( 1 + \frac{1}{2} (\#) ) 
 \le 
 \log \left( 1 + \frac{1}{2} \right) < 0.5.
\]
Together with
$\frac{1}{1+\frac{\Delta}{4}} < 1$,
this means that the r.h.s. of \eqref{eq:cond2-thm} is less than $0.5 \frac{\Delta}{8}$.

As a result,
under these assumptions,
Proposition \ref{prop:I-gap-preserve} and Proposition \ref{prop:S-evolve} apply.
By \eqref{eq:S-evolve}, 
\begin{align} 
& S(x,t) - S(x,0) 
 = 
\int_0^t
 \left\{ -\sum_{\substack{k\in I \\j \in I}} (\psi_{j}^{T}\dot{D}\psi_{k}) \psi_k(x)\psi_j(x)+ 2 
 \sum_{\substack{k\in I \\ j \notin I}}
 \frac{\psi_{j}^{T} (\dot{W} -\lambda_k \dot{D})\psi_{k}}{\lambda_{k}-\lambda_{j}} \psi_k(x) \psi_{j}(x)
\right\} d\tau,
\label{eq:proof-thm-1}
\end{align}
where in the integrand all the variables involving time take value at time $\tau$.

Introducing the notation
\begin{equation}\label{eq:def-barS}
\bar{S}(t):= \sup_{x\in \cV = \cB \cup \cC} S(x,t),
\end{equation}
we are going to prove the following two claims: 
For any $t \in [0,1]$,

(1)$\forall x\in \cV = \cB \cup \cC$, 
\[
|S(t,x) - S(0,x)| \le \tilde{C} \int_0^t \bar{S}(\tau) d\tau,
\quad
\tilde{C} := \left( 1+\frac{4}{\Delta} \right) \frac{2 C }{ \underline{d_0}},
\]

(2) $\bar{S}(t) \le e^{\tilde{C} t} \bar{S}(0) $.

If true, then 
\begin{align*}
& |S(t,x) - S(0,x)| 
 \le  \tilde{C} \int_0^t \bar{S}(\tau) d\tau \le \tilde{C} \int_0^t \bar{S}(0) e^{\tilde{C}\tau} d\tau  
=  \bar{S}(0) (e^{\tilde{C}t} -1).
\end{align*}
Meanwhile, by Proposition \ref{prop:init-sepa} (2),
$
\bar{S}(0) 
\le 
\frac{1}{n \underline{d_0}} \frac{K}{\delta} (1+2\varepsilon_1)
$,
thus 
\[
|S(t,x) - S(0,x)| \le (e^{\tilde{C}t} -1) \frac{K(1+2\varepsilon_1)}{n \delta \underline{d_0} }, 
\quad \forall x \in \cV.
\]
By Proposition \ref{prop:init-sepa} (1),
the initial separation on $\cB$ and $\cC$ by $S(x)$  is at least $g_0$,
so the threshold claimed in the theorem exists as long as 
\[
g_0 
\ge 
2 (e^{\tilde{C}t} -1) \frac{K(1+2\varepsilon_1)}{n \delta \underline{d_0} }.
\]
This is reduced to 
\[
2(e^{\tilde{C} t} -1) \le \frac{ (\#) }{ 1+ 2 \varepsilon_1 } ,
\]
which is guaranteed by condition (ii). 

To prove Claim (1):
By \eqref{eq:proof-thm-1}, $\forall x \in \cV$,
\[
|S(x,t) - S(x,0) | 
\le \int_0^t \text{I}(x,\tau) + \text{II}(x,\tau) + \text{III}(x,\tau) d\tau,
\]
where 
\begin{align}
\text{I}(x,\tau) 
&= |\sum_{k\in I,\, j \in I} (\psi_{j}^{T}\dot{D}\psi_{k}) \psi_k(x)\psi_j(x)| \\
\text{II}(x,\tau) 
& = 2 |
 \sum_{k\in I, \, j \notin I}
 \frac{ \psi_{j}^{T} \dot{W} \psi_{k}}{\lambda_{k}-\lambda_{j}} \psi_k(x) \psi_{j}(x)|\\
 \text{III}(x,\tau) 
& = 2 |
 \sum_{k\in I, \, j \notin I}
 \frac{ \lambda_k \psi_{j}^{T} \dot{D} \psi_{k}}{\lambda_{k}-\lambda_{j}} \psi_k(x) \psi_{j}(x)|.
\end{align}
For $\text{I}(x,\tau) $, note that 
\begin{align*}
\text{I}(x,\tau) 
& \le 
\sum_{k\in I,\, j \in I} \sum_{y \in \cV } |\dot{D}(y)| |\psi_{j}(y)| |\psi_{k}(y)| |\psi_k(x)| |\psi_j(x)| \\
& = \sum_{y \in \cV } |\dot{D}(y)| ( \sum_{k\in I} |\psi_{k}(y)| |\psi_k(x)| )^2  \\
& \le \sum_{y \in \cV } |\dot{D}(y)| ( \sum_{k\in I} |\psi_{k}(y)|^2) ( \sum_{l\in I} |\psi_l(x)| )^2 ) \\
& = \sum_{y \in \cV } |\dot{D}(y)| S(y) S(x).
\end{align*}
Utilizing the estimate that 
(similar to \eqref{eq:bound-psix-relaxed})
\[
S(x) 
= \sum_{j \in I }^n \psi_j(x)^2 
 \le 
\sum_{j=1}^n \psi_j(x)^2 
= \frac{1}{d(x)}
\le \frac{1}{ \underline{d_0}}, 
\quad \forall x \in \cV,
\]
and then
\begin{align*}
\text{I}(x,\tau) 
& \le S(x) \sum_{y \in \cV } |\dot{D}(y)| \frac{1}{ \underline{d_0} } = S(x) \frac{1}{ \underline{d_0}} \sum_{y \in \cV } |\sum_{z \in \cV} E(y,z)| 
= S(x) \frac{2C}{\underline{d_0}}.
\end{align*}
By the definition of $\bar{S}$ in \eqref{eq:def-barS},
this gives  
\begin{equation}\label{eq:bound-I}
\text{I}(x,\tau)  
\le 
	\bar{S}(\tau) \frac{2C}{\underline{d_0}}.
\end{equation}
For $\text{II}(x,\tau) $, 
by Proposition \ref{prop:I-gap-preserve}, 
in the denominator
$|\lambda_k - \lambda_j| \ge \Delta$, 
and then
\begin{align*}
\text{II}(x,\tau) 
& \le 2 
 \sum_{k\in I, \, j \notin I}
 \frac{ |\psi_{j}^{T} \dot{W} \psi_{k}|}{|\lambda_{k}-\lambda_{j}|} |\psi_k(x)| |\psi_{j}(x)| \\
& \le 
\frac{2}{\Delta} 
\sum_{k\in I, \, j \notin I}
|\psi_{j}^{T} E \psi_{k}| |\psi_k(x)| |\psi_{j}(x)| \quad \text{($\dot{W} = E$)}\\
& 
\le \frac{2}{\Delta} 
\sum_{k\in I, \, j \notin I}
\sum_{ y\in\cV, \, z\in \cV}
E(y,z) |\psi_j(y)| |\psi_k(z)| |\psi_k(x)| |\psi_{j}(x)| \\
& = \frac{2}{\Delta} 
\sum_{ y\in\cV, \, z\in \cV} E(y,z)
\sum_{k \in I} |\psi_k(z)||\psi_k(x)| 
\sum_{j \notin I} |\psi_j(y)| |\psi_{j}(x)| \\
& \le \frac{2}{\Delta} 
\sum_{ y\in\cV, \, z\in \cV} E(y,z)
(\sum_{k \in I} \psi_k(z)^2)^{1/2} (\sum_{l \in I} \psi_l(x)^2)^{1/2} \\
& ~~~~~~~~ 
(\sum_{j =1}^n \psi_j(y)^2)^{1/2} (\sum_{m=1}^n \psi_{m}(x)^2)^{1/2} \\
& \le \frac{2}{\Delta} 
\sum_{ y\in\cV, \, z\in \cV} E(y,z)
\sqrt{S(z)} \sqrt{S(x)} \frac{1}{\sqrt{d(y)}} \frac{1}{\sqrt{d(x)}}  \\
& ~~~~~~~~
\text{(definition of $S$, $\sum_{j=1}^n \psi_j(x)^2 = \frac{1}{d(x)}$)}\\
& \le \frac{2}{\Delta}
\bar{S}(\tau) \frac{1}{ \underline{d_0} }
\sum_{ y\in\cV, \, z\in \cV} E(y,z) \\
& ~~~~~~~~
 \text{(by that  $S(x), \, S(z) \le \bar{S}(\tau)$, and $d(y), \, d(x) \ge \underline{d_0}$)}
\end{align*}
which means that 
\begin{equation}\label{eq:bound-II}
\text{II}(x,\tau) 
\le 
\frac{2}{\Delta}
\bar{S}(\tau) \frac{2C}{ \underline{d_0}}.
\end{equation}
For $\text{III}(x,\tau)$, 
since $|\lambda_k| \le 1$, 
and $\dot{D}(y) = \sum_{z \in \cV} E(y,z)$,
one can show that 
\begin{equation}\label{eq:bound-III}
\text{III}(x,\tau) 
\le 
\frac{2}{\Delta}
\bar{S}(\tau) \frac{2 C}{ \underline{d_0}}
\end{equation}
using a similar argument as in bounding $\text{II}(x,\tau)$.
Putting  \eqref{eq:bound-I} \eqref{eq:bound-II} \eqref{eq:bound-III} together, one has that
\[
|S(x,t) - S(x,0) | 
\le
\int_0^t 
\bar{S}(\tau)
\frac{2 C}{\underline{d_0}} 
\left(1 + \frac{4}{\Delta}\right)
d\tau
= \int_0^t  \bar{S}(\tau) \tilde{C} d\tau,
\]
namely Claim (1).

To prove Claim (2):
Note that 
\[
\bar{S}(t) - \bar{S}(0) 
\le \sup_{x \in \cV} ( S(x,t) - S(x,0) )
\]
(suppose that $\bar{S}(t) = S( x_0 , t)$ for some $x_0$,
then $S( x_0 , t) - \bar{S}(0) \le S( x_0 , t) - S(x_0, 0)$).
Since Claim (1) holds uniformly for $x$, 
this implies that 
\[
\bar{S}(t) - \bar{S}(0) 
\le \tilde{C} \int_0^t \bar{S}(\tau) d\tau,
\]
and the claim then follows by Gronwall's inequality.
\end{proof}

\section{Further Comments}

{\it  Eigenvector selection}.
Related works have devised different methods to perform eigenvector selection to identify anomalies~\cite{miller2010subgraph,wu2013spectral}. 
 The proposed spectral embedding norm can also be used for eigenvector selection.
\cite{Mishne2018} demonstrated that it can be used to identify pixels 
which define clusters and find the embedding coordinates that best separate them from the background.
An example in the anomaly detection case is given in Appendix \ref{app:A}.

{\it Viewed as diffusion distance}. 
With $f(\lambda) = \lambda^p$, $S(x)$ can be interpreted as the (squared) {\it diffusion distance} between node $x$ and the origin at diffusion time $\frac{p}{2}$
\cite{coifman2006diffusion}. 
The diffusion distance can be interpreted as a geometric distance between two nodes 
when the affinity graph is built from data points lying on a 
manifold embedded in the ambient space,
and the distance is intrinsic to the manifold geometry and invariant to the specific embedding.
Since $\lambda_k^p \to 0$ when $p$ is large (except for $\lambda_k =1$),
the origin point is the limiting point of the diffusion map embedding.
Thus for $x$ in a sub-cluster in $\cC$, 
the weighted norm $S(x)$ with positive $p$ can be viewed as a measurement of the 
extent of metastability (the depth of the well) of the potential well associated with the sub-cluster.
 In view of the diffusion distance, under the setting of this paper, nodes in $\cB$ are very similar to one another,
and in comparison nodes in $\cC$ are distinct from those in $\cB$ (as well as from other sub-clusters in $\cC$, which is not reflected in $S$). 
A similar weighted form has also been studied in \cite{CHENG201848} for graph-based outlier detection. 
In the primary application considered in this paper, 
the leading eigenvalues are all close to 1, 
which means that the weighted form \eqref{eq:def-S-weighted} is not very different unless $p$ is large.
On the other hand setting $p$ to be large may suppress the high-index eigenvectors by small weights while they are actually the informative ones to indicate $\cC$. 
Due to these reasons, we mainly consider $f=1$ in the current paper, though the analysis directly extends. 

{\it Relation to Heat kernel signature}. 
With $f(\lambda) = e^{-(1-\lambda) t}$ from some positive $t$, 
$S(x)$ takes the form of the Heat Kernel Signature (HKS) proposed by Sun, Ovsjanikov, and Guibas~\cite{sun2009concise}.
HKS is used in the shape analysis community as a concise multiscale feature descriptor on manifolds or 3D meshes in tasks of shape matching, correspondence and retrieval. 
Note that the HKS has been used locally to identify salient features on a given shape, similar to the spectral embedding norm used for outlier detection in this paper.
The application setting for HKS focuses on Riemannian manifolds and 3D mesh,
which differs from here as we analyze the affinity matrix of a high dimensional dataset composed of clusters and background.
In particular, for reasons explained above, we mainly consider $f=1$ in the definition of $S(x)$.

{\it Indexing eigenvectors by support regions}. 
The phenomenon studied here also suggests that 
sorting by the magnitude of eigenvalues may not be the most informative way to index the eigenvectors,
a problem recently addressed in \cite{saito2018can}.
Here we study the special case where eigenvectors can be grouped by where they are mainly supported on.
In the pseudo-dynamic \eqref{eq:deformW},
the eigenvectors begin with being exactly supported on either $\cC$ or $\cB$ at $t=0$,
and as time develops this pattern is nearly preserved as long as the $\cC$-$\cB$ inter-block connections are not too strong.
The distinct support regions of eigenvectors 
appears to be irrelevant to the magnitude of the eigenvalues nor the existence of spectral gaps.
This suggests that grouping eigenvectors by their localization regions maybe a better way to arrange them in such cases.
However, one still needs to be careful with the instability of eigenvectors: 
As shown in the numerical example,
 when two eigenvalues get close in the pseudo-dynamic, 
 the associated pair of eigenvectors ``swap'' their values. 
 (The swapping may be analyzed by the differential equation \eqref{eq:psi-evolve}: 
 assuming that among all the pairs of neighboring eigenvalues 
 only one pair $(\lambda_k - \lambda_j) $ is approaching zero,
 then the dynamic of $\psi_k$ evolution is dominated by that pair, 
 which approximates a rotation among the indices $j$ and $k$.)
Our analysis in the current paper handles this by the summation in $S$ over the index group $I$, 
which makes $S$ invariant to such swaps as long as $j$ and $k$ both belong to $I$.
Generally, since eigen-crossings only happen at isolated times in the deformation dynamics \cite{kato2013perturbation}, 
these special times can be excluded. Then one can say that the eigenvectors continue to almost localize on one of the two blocks most of the time.

\section*{Acknowledgments}  
We would like to thank Ronald R. Coifman for useful discussions. 

\bibliographystyle{IEEEtran}
\bibliography{eig}

\appendix

\setcounter{table}{0}
\renewcommand{\thetable}{A.\arabic{table}}
\setcounter{figure}{0}
\renewcommand{\thefigure}{A.\arabic{figure}}

\begin{figure}[t]
\centering{
 \subfloat[] {\hspace{-6pt}
 	\includegraphics[height=0.2\linewidth]{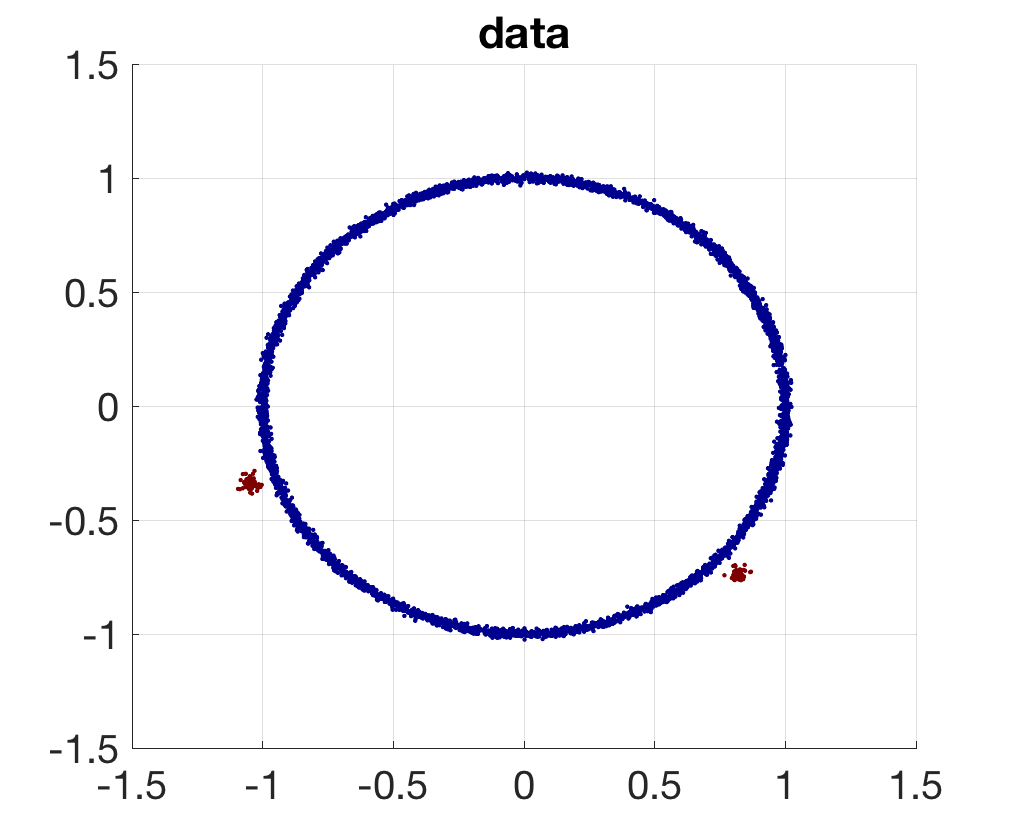} \hspace{-10pt}}
\subfloat[] {\includegraphics[height=0.2\linewidth]{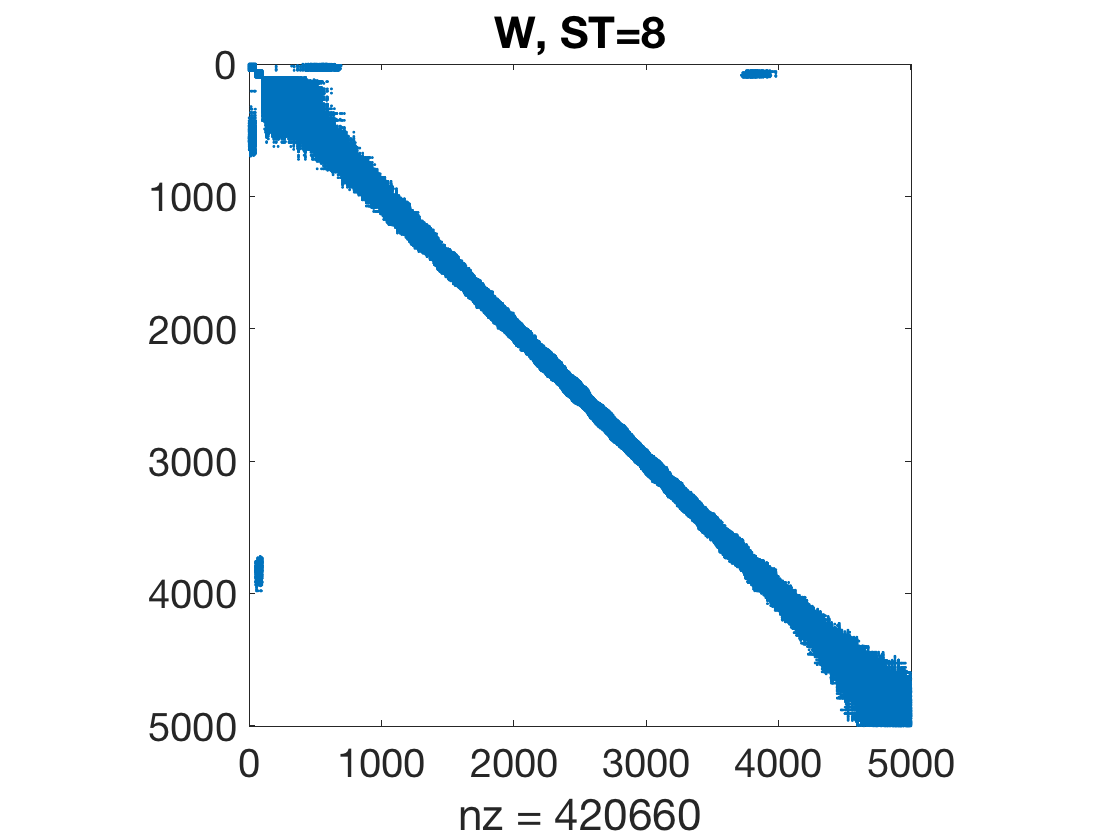}\hspace{-10pt}}
\subfloat[] {\includegraphics[height=0.2\linewidth]{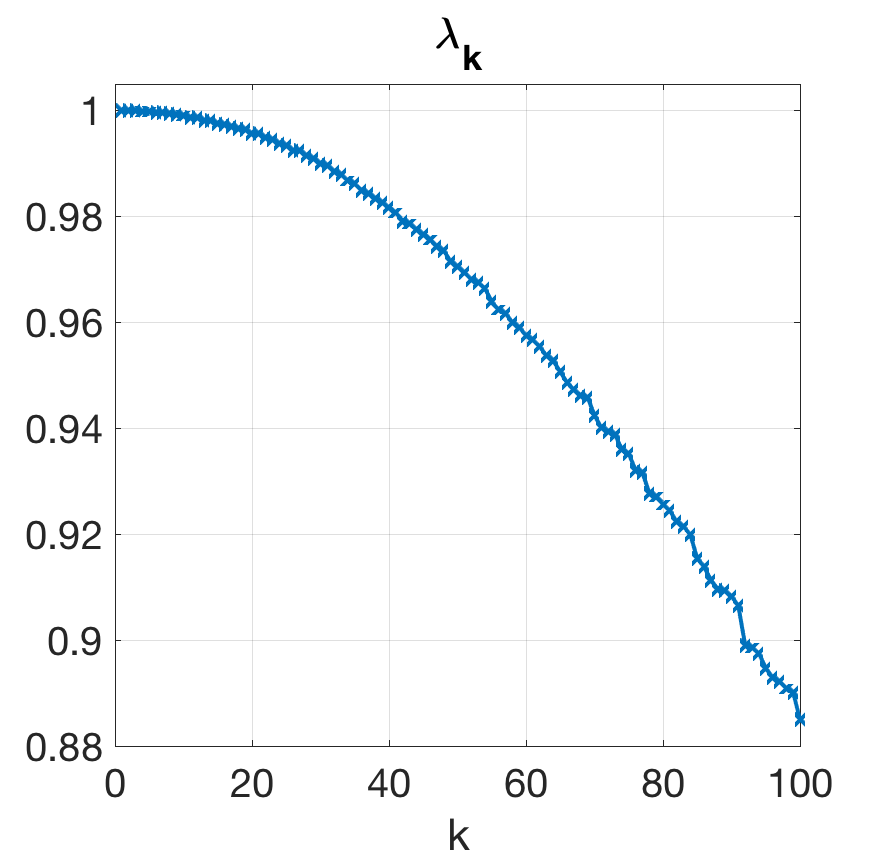}  \hspace{-6pt}}
\subfloat[]{ \includegraphics[height=0.2\linewidth]{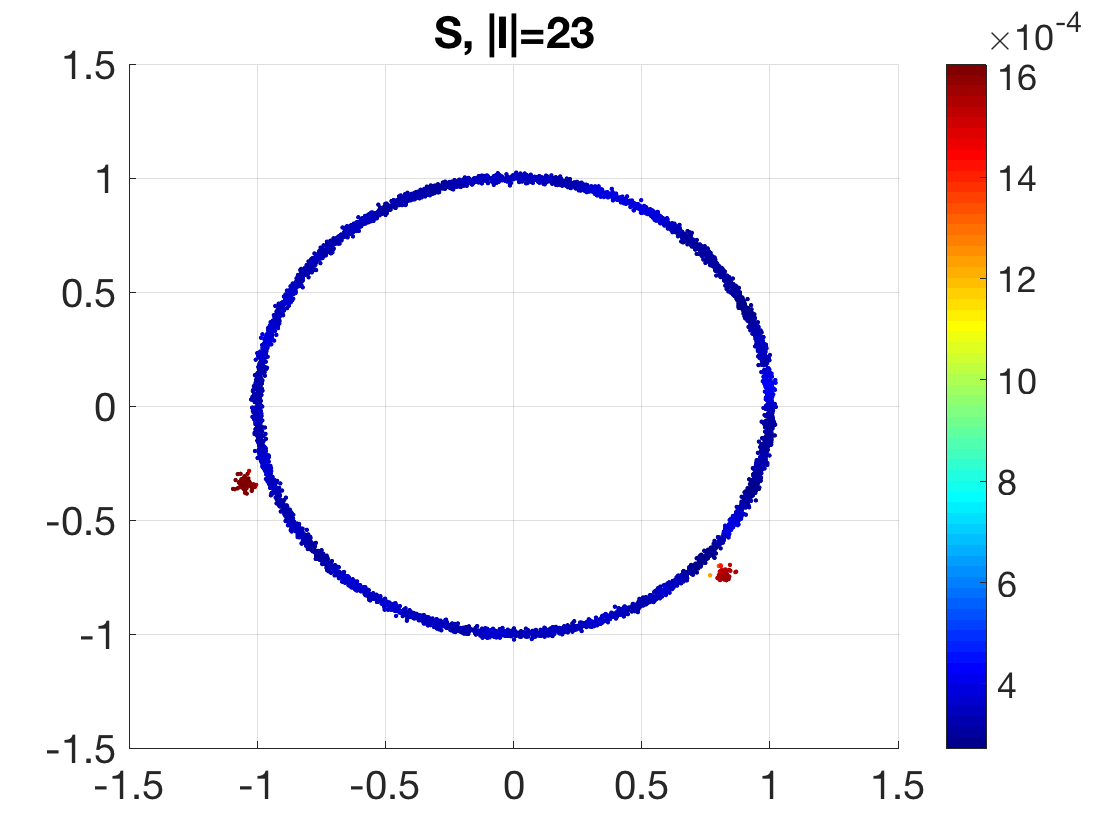} }\\
\subfloat[]{ 
\hspace{-15pt}
\includegraphics[height=0.288\linewidth]{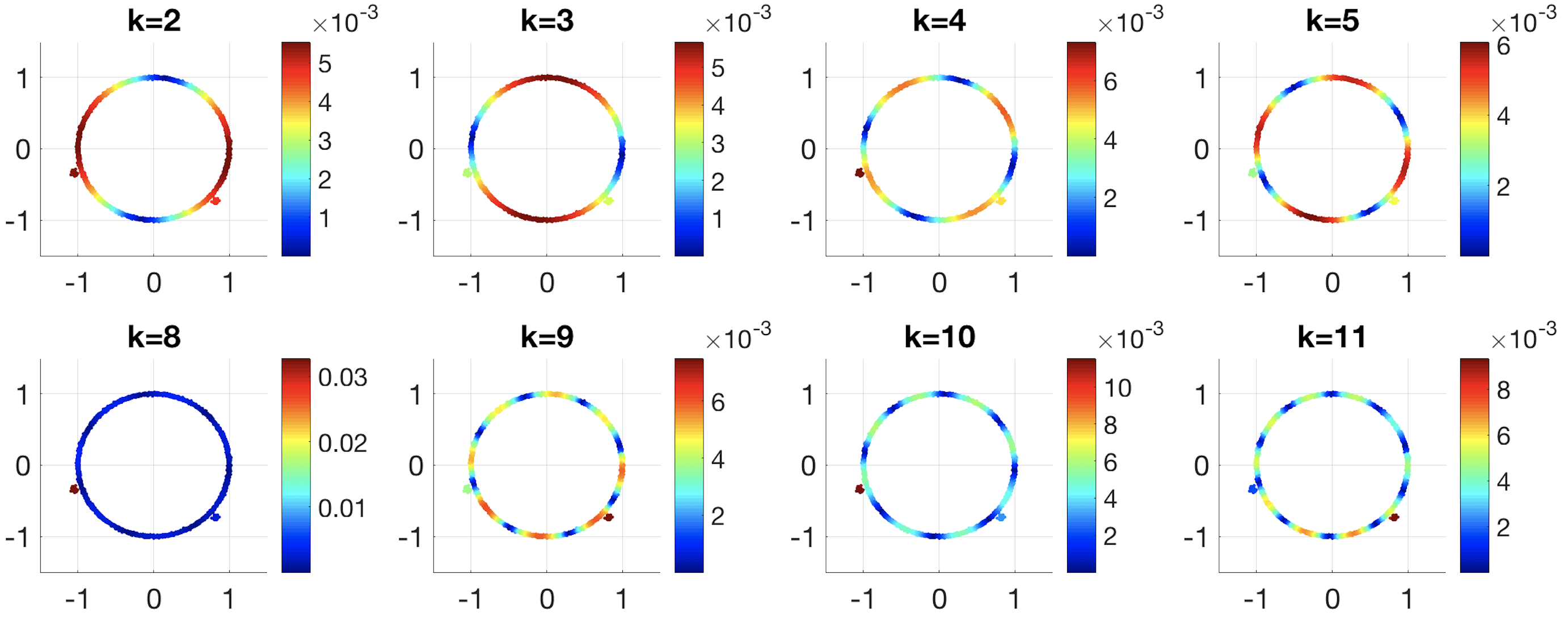} \hspace{-10pt}}
\subfloat[]{\includegraphics[height=0.29\linewidth]{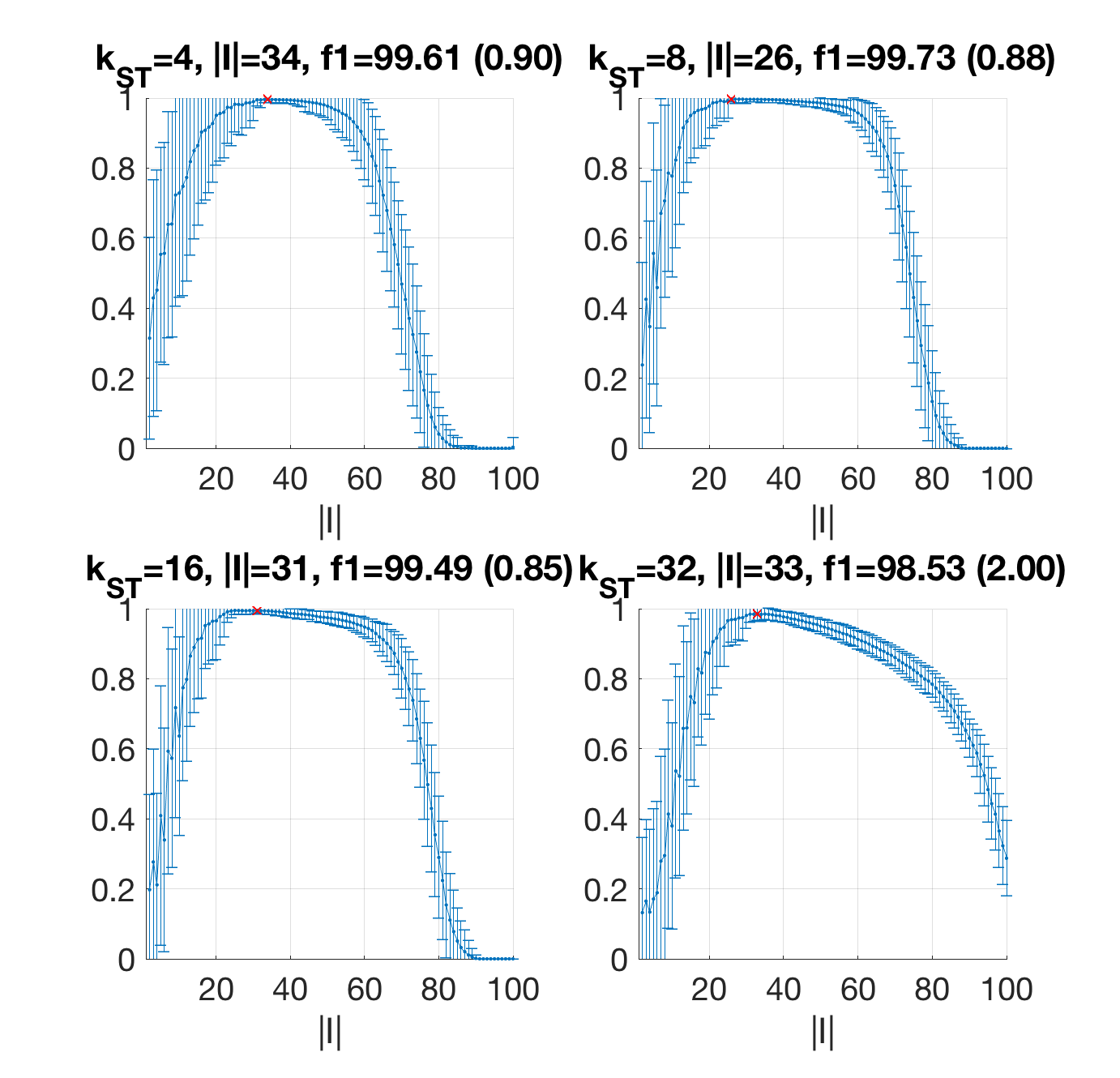}}
}
\vspace{-10pt}
\caption{ 
Same plot as Figure \ref{fig:toy2-exp}. 
$K=2$ clusters, $\delta = 0.02$.
%$K=10$ clusters in $\cC$, $\delta = 0.1$.
}
\label{fig:toy1-exp}
\end{figure}

\section{Selection of eigenvectors in anomaly detection}\label{app:A}

Eigenvector selection can be used to better visualize and characterize an anomaly in the data, by finding a subspace in which it is separated from the normal data. 
Let $x_\textrm{max}=\arg\max_x S(x)$ be the pixel with the maximal embedding norm.
In Figures~\ref{fig:mine_maxInd}-\ref{fig:mine_maxInd2} we select the three eigenvectors with maximum absolute value on $x_\textrm{max}$, plotting $\vert \Psi_k(x_\textrm{max}) \vert$ on the left).
In the middle plot we color the pixels in the image according to pseudo-RGB values assigned to the three selected coordinates (right plot).
Note that for Figure~\ref{fig:mine_maxInd2}, the selected coordinates are quite deep in the spectrum: $k=59,47,48$. 

\begin{figure}[h]
\centering{\includegraphics[width=1.0\linewidth]{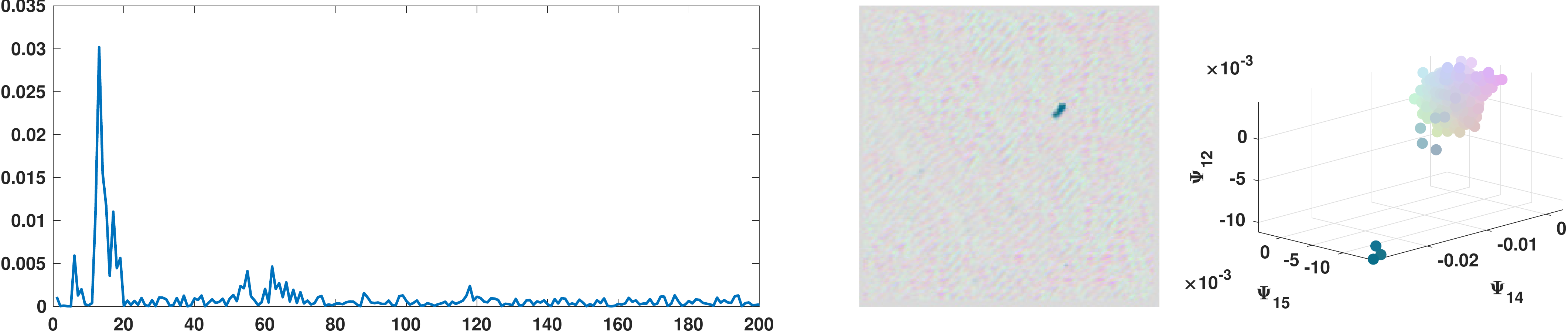}}
\caption{
Selected eigenvectors of the graph Laplacian computed from the side-scan sonar image in Figure~\ref{fig:sea-mines}(a).
(Left) 
Plot of $\vert \Psi_k(x_\textrm{max}) \vert$ where $x_\textrm{max}=\underset{x}{\arg \max} S(x)$,
and $x$-axis is the index $k$. 
(Middle) Each pixel colored according to RGB colors assigned to the three embedding coordinates, 
(Right) with maximum absolute value on $x_\textrm{max}$. 
}
\label{fig:mine_maxInd}
\end{figure}

\begin{figure}[h]
\centering{\includegraphics[width=1.0\linewidth]{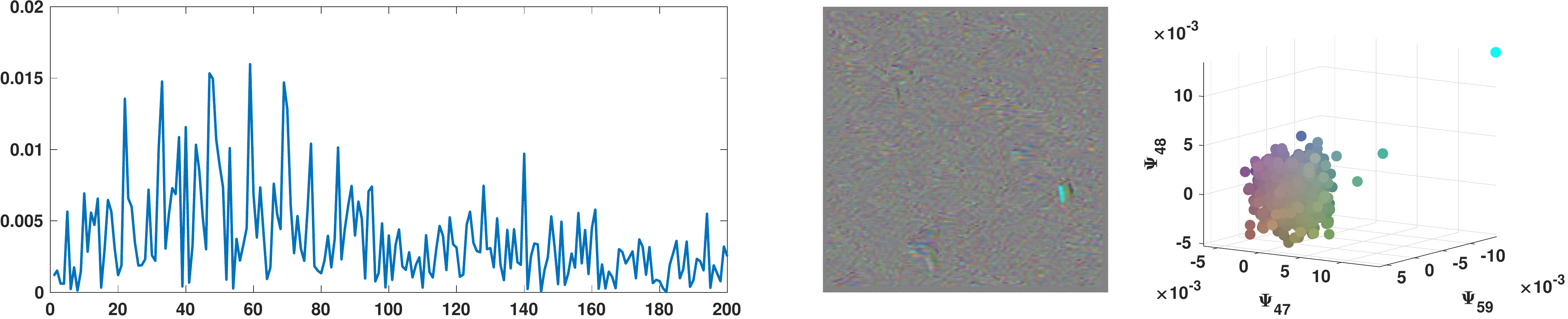}}
\caption{
Same plot as Figure \ref{fig:mine_maxInd} for the side-scan sonar image in Figure~\ref{fig:sea-mines}(c).
}
\label{fig:mine_maxInd2}
\end{figure}

\section{Derivation of \eqref{eq:lambda-evolve} and \eqref{eq:psi-evolve}}
\label{app:derive}

Recall that $\{ \lambda_k \}_{k=1}^n$ are the $n$ real-valued eigenvalues of $P$,
and $\psi_k$ is the eigenvector associated with $\lambda_k$.
By definition,
\[
W \psi_k = \lambda_k D \psi_k, 
\quad
\psi_k^T D \psi_l =\delta_{kl},
\]
where $W$, $D$, $\lambda_k$ and $\psi_k$ all depend on $t$ (the dependence is omitted in the notation)
and are differentiable over time. 
Taking derivative w.r.t $t$ of both sides, 
\begin{align}
\dot{W} \psi_k + W \dot{\psi}_k
& = \dot{\lambda}_k D \psi_k 
  + \lambda_k \dot{D} \psi_k 
  + \lambda_k D \dot{\psi}_k 
  \label{eq:dot-1}\\
0 
&= \dot{\psi}_k^T D \psi_l 
   +\psi_k^T \dot{D} \psi_l 
   +\psi_k^T D \dot{\psi}_l 
   \label{eq:dot-2}
\end{align}

Applying $\psi_k^T$ to both sides of \eqref{eq:dot-1} gives
\[
\psi_k^T \dot{W} \psi_k 
= \dot{\lambda}_k \psi_k^T D \psi_k 
  + \lambda_k \psi_k^T \dot{D} \psi_k 
= \dot{\lambda}_k  + \lambda_k \psi_k^T \dot{D} \psi_k 
\]
where in the first equality we use $W \psi_k = \lambda_k D \psi_k$. 
This proves \eqref{eq:lambda-evolve}.

To prove \eqref{eq:psi-evolve},
since $\{ \psi_k \}_k$ form a $D$-orthonormal basis of $\R^n$, let
\[
\dot{\psi}_k = \sum_{j=1}^n b_j \psi_j,
\quad
b_j = \dot{\psi}_k^T D \psi_j. 
\]
Apply $\psi_j^T$, $j \neq k$, to both sides of \eqref{eq:dot-1} gives 
\[
\psi_j^T \dot{W} \psi_k = (\lambda_k - \lambda_j) \psi_j^T D \dot{\psi}_k + \lambda_k \psi_j^T \dot{D} \psi_k,
\]
thus 
\[
 \beta_j = \frac{\psi_j^T ( \dot{W} - \lambda_k \dot{D} )\psi_k}{ \lambda_k - \lambda_j }, 
\quad j\neq k
\]
whenever $ \lambda_k - \lambda_j \neq 0 $. 
Letting $l = k$ in \eqref{eq:dot-2} gives  
\[
2 \dot{\psi}_k^T D \psi_k 
   +\psi_k^T \dot{D} \psi_k =0,  
\]
thus \[
\beta_k = - \frac{1}{2}\psi_k^T \dot{D} \psi_k.
\]
This proves \eqref{eq:psi-evolve}.

\end{document}